\documentclass[twoside]{article}

\usepackage[accepted]{aistats2024}

\usepackage[style=authoryear,giveninits=true,uniquename=false,uniquelist=false,maxbibnames=99]{biblatex}
\usepackage[colorlinks=true,citecolor=blue,urlcolor=blue]{hyperref}
\usepackage{amsfonts}
\usepackage{amsthm}
\usepackage{mathtools}
\usepackage{tikz}
\usepackage{caption}
\usepackage{subcaption}
\usepackage{appendix}
\usepackage[ruled]{algorithm2e}

\DeclareSymbolFont{sfletters}{OML}{cmbrm}{m}{it}
\DeclareMathSymbol{\svarepsilon}{\mathord}{sfletters}{"22}

\makeatletter
\newcommand{\removelatexerror}{\let\@latex@error\@gobble}
\makeatother


\newtheorem{theorem}{Theorem}
\newtheorem{proposition}{Proposition}
\newtheorem{lemma}{Lemma}


\addbibresource{library.bib}
\renewbibmacro{in:}{}
\DeclareNameAlias{sortname}{family-given}
\DeclareNameAlias{default}{family-given}

\newcommand\rurl[1]{\href{http://#1}{\nolinkurl{#1}}}

\begin{document}

\twocolumn[
\aistatstitle{Contextual Directed Acyclic Graphs}
\aistatsauthor{Ryan Thompson \And Edwin V. Bonilla \And Robert Kohn}
\aistatsaddress{University of New South Wales \\ CSIRO's Data61 \And CSIRO's Data61 \And University of New South Wales}
]

\begin{abstract}
Estimating the structure of directed acyclic graphs (DAGs) from observational data remains a significant challenge in machine learning. Most research in this area concentrates on learning a single DAG for the entire population. This paper considers an alternative setting where the graph structure varies across individuals based on available ``contextual'' features. We tackle this contextual DAG problem via a neural network that maps the contextual features to a DAG, represented as a weighted adjacency matrix. The neural network is equipped with a novel projection layer that ensures the output matrices are sparse and satisfy a recently developed characterization of acyclicity. We devise a scalable computational framework for learning contextual DAGs and provide a convergence guarantee and an analytical gradient for backpropagating through the projection layer. Our experiments suggest that the new approach can recover the true context-specific graph where existing approaches fail.
\end{abstract}

\section{INTRODUCTION}
\label{sec:introuction}

Directed acyclic graphs (DAGs)---graphs with directed edges and no cycles---are a core tool for probabilistic graphical modeling \parencite[see, e.g.,][]{Murphy2023}. Their ability to represent complex multivariate relationships makes them valuable across a broad range of domains, including psychology \parencite{Foster2010}, economics \parencite{Imbens2020}, and epidemiology \parencite{Tennant2021}. Though DAGs have a long and rich history in machine learning and statistics \parencite{Lauritzen1988,Pearl1988}, they have recently attracted significant attention due to a computational breakthrough by \textcite{Zheng2018} that reframed the combinatorial DAG structure learning problem as a continuous optimization problem. This and subsequent continuous reformulations of the DAG problem have enabled scalable structure learning for applications that were considered intractable. See \textcite{Vowels2022} for a recent review of this work.

A major focus of the structure learning literature, both recent and past, is on estimating a single DAG for the entire population. Although these ``fixed DAGs'' are suitable in standard settings, they can fail in heterogeneous or non-stationary environments \parencite{Huang2020,Zhou2022} and in the presence of external modifying effects \parencite{Ni2019}. An important application where these types of problems arise is that of understanding recreational drug consumption patterns. The consumption patterns can be represented as a DAG whose structure is dictated according to individual personality traits. As traits differ across individuals, so does the graph structure---an edge between a pair of drugs present for one individual might be absent for another. Furthermore, an edge may point in opposite directions for different individuals. The personality traits in this example represent information that encodes context into the graph. DAGs that change with context, which we refer to as ``contextual DAGs,'' lie beyond the capability of existing structure learning methods.

\begin{figure*}[t]
\centering
\begin{tabular}{cccccc}
\begin{subfigure}[t]{0.14\linewidth}
\centering
\caption*{\footnotesize True DAG}
\vspace{0.2cm}
\resizebox{0.85\linewidth}{!}{\includegraphics{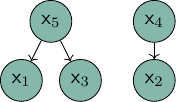}}
\end{subfigure}
&
\begin{subfigure}[t]{0.14\linewidth}
\centering
\caption*{\footnotesize Contextual DAG }
\vspace{0.2cm}
\resizebox{0.85\linewidth}{!}{\includegraphics{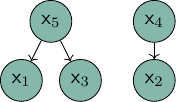}}
\end{subfigure}
&
\begin{subfigure}[t]{0.14\linewidth}
\centering
\caption*{\footnotesize Fixed DAG}
\vspace{0.2cm}
\resizebox{0.85\linewidth}{!}{\includegraphics{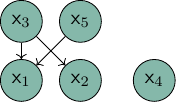}}
\end{subfigure}
&
\begin{subfigure}[t]{0.14\linewidth}
\centering
\caption*{\footnotesize True DAG}
\vspace{0.2cm}
\resizebox{\linewidth}{!}{\includegraphics{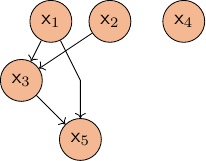}}
\end{subfigure}
&
\begin{subfigure}[t]{0.14\linewidth}
\centering
\caption*{\footnotesize Contextual DAG }
\vspace{0.2cm}
\resizebox{\linewidth}{!}{\includegraphics{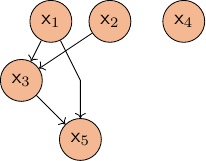}}
\end{subfigure}
&
\begin{subfigure}[t]{0.14\linewidth}
\centering
\caption*{\footnotesize Fixed DAG}
\vspace{0.2cm}
\resizebox{0.85\linewidth}{!}{\includegraphics{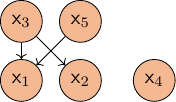}}
\end{subfigure} \\~\\
\multicolumn{3}{c}{\small (a) Graph conditional on $z=z_1$.} & \multicolumn{3}{c}{\small (b) Graph conditional on $z=z_2$.}
\end{tabular}
\caption{Illustration of the contextual DAG. The true graph is a function of the features $z$. The left three graphs correspond to one realization of $z$, and the right three correspond to a second independent realization.}
\label{fig:example}
\end{figure*}

Our paper demonstrates that it is possible to learn contextual DAGs using available contextual information. Let $x=(\mathsf{x}_1,\ldots,\mathsf{x}_p)^\top$ be a vector of continuous variables (nodes) and $z=(\mathsf{z}_1,\ldots,\mathsf{z}_m)^\top$ be a vector of contextual features. We study linear DAGs on $x$, which can be represented via a $p\times p$ weighted adjacency matrix $W=(\mathsf{w}_{jk})$ such that $\mathsf{w}_{jk}$ is non-zero if and only if a directed edge exists from node $j$ to node $k$. We define a fixed DAG as a graph such that $W$ is independent of $z$ and a contextual DAG as a graph where $W$ is a function of $z$. Via the weighted adjacency representation, a contextual DAG can be expressed in terms of a (linear) structural equation model:
\begin{equation}
\label{eq:sem}
\mathsf{x}_k=\sum_{j=1}^p\mathsf{x}_j\mathsf{w}_{jk}(z)+\svarepsilon_k,\quad k=1,\ldots,p,
\end{equation}
where $\varepsilon=(\svarepsilon_1,\ldots,\svarepsilon_p)^\top$ is a zero mean stochastic noise vector. Given the contextual features $z$, the weighted adjacency function $W(z):=[\mathsf{w}_{jk}(z)]$ encodes the parents of $\mathsf{x}_k$ (the $j$s such that $\mathsf{w}_{jk}(z)$ is non-zero) and the effects of those parents. A key property of a DAG is that its nodes can be sorted such that parents come before children, known as a topological ordering \parencite[see, e.g.,][\S 4.2]{Murphy2023}. Thus, a contextual DAG allows both the topological ordering and weights to be functions of the contextual features. A fixed DAG is a special case where these functions are constant.

Estimating the structural equation model \eqref{eq:sem} is challenging, even in the fixed case, because of the combinatorial nature of the search space. Leveraging recent breakthroughs in continuous optimization routines for fixed DAGs \parencite{Zheng2018,Bello2022}, we devise a neural network architecture that learns to map from the contextual feature space to the space of DAGs. It achieves this task using a novel layer that projects a simple directed graph\footnote{A simple directed graph is a directed graph with no self-loops (an edge directed from a node back to itself) and at most one edge in each direction between any pair of nodes.} from a feedforward neural network onto the space of acyclic graphs. This ``projection layer'' simultaneously induces sparsity over the graph by constraining it to an $\ell_1$ ball of appropriate size, leading to a parsimonious representation of the data with as few edges as feasible. Although the projection layer constitutes an iterative optimization algorithm, we establish its convergence properties and show that it can be solved in parallel on a GPU, allowing for efficient forward passes. We also derive an analytical form for the layer's gradients to facilitate efficient backpropagation during training.

To briefly illustrate our proposal, we consider a small-scale Gaussian model with five variables generated by a graph whose node ordering depends on a contextual feature vector $z\in\mathbb{R}^2$. The number of edges also varies with the contextual features, so graphs for different realizations of $z$ need not have the same sparsity. Figure~\ref{fig:example} compares the true DAG under this model with the contextual and fixed DAG estimates for two independent draws of $z$. The topological ordering of the true graph under $z_1$ is distinct from that under $z_2$. For instance, the direction of the edge between $\mathsf{x}_1$ and $\mathsf{x}_5$ changes from $z_1$ to $z_2$. The number of edges also varies---three edges under $z_1$ and four under $z_2$. By learning to predict a graph from the contextual features, the contextual DAG captures these evolving dependencies and recovers the true graph in both cases. Conversely, the fixed DAG, which assumes the structure is unchanging, fails to recover either graph.

This paper makes four contributions:
\begin{enumerate}
\item We introduce a new neural network architecture that learns to predict context-specific DAGs.
\item We establish a theoretical convergence guarantee for the network's projection layer and derive its analytical gradient, avoiding the expense of automatically differentiating through the optimizer.
\item We analyze our approach in numerical experiments that show it is the best available tool for structure learning in the contextual setting.
\item We provide the structure learning community with \texttt{ContextualDAG}, an open-source Julia implementation that scales well on GPUs.
\end{enumerate}

Before proceeding, we remark that we do not seek to causally interpret contextual DAGs in this paper, and hence make no causal assumptions (e.g., causal sufficiency). We instead focus on the core learning problem of finding a contextual DAG that fits the data well.

\section{CONTEXTUAL DAGS}
\label{sec:contextual}

\begin{figure*}[t]
\centering
\resizebox{0.8\linewidth}{!}{\includegraphics{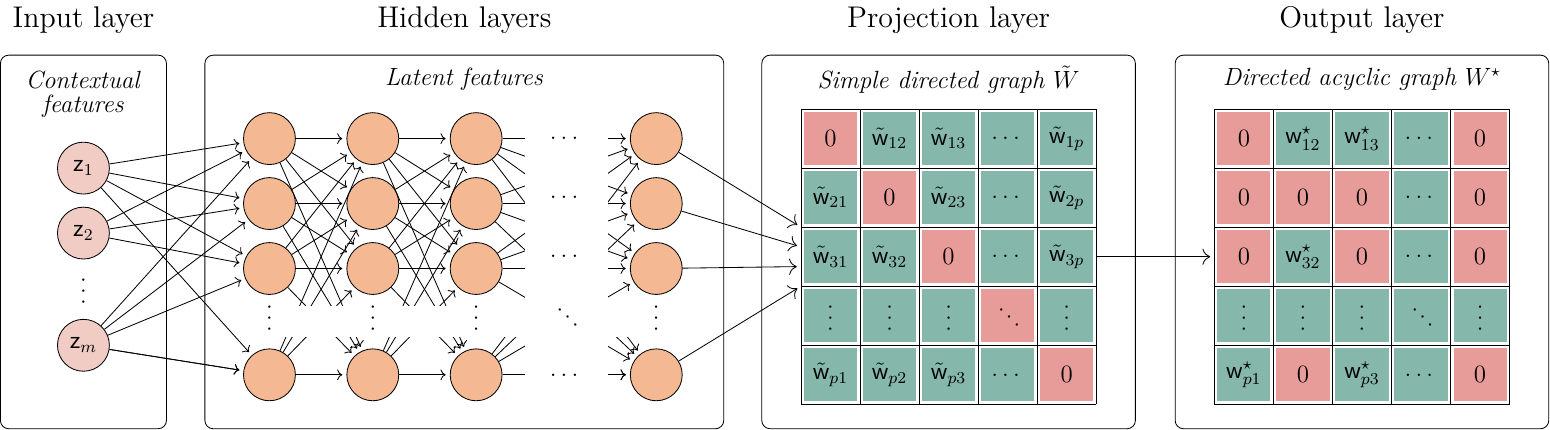}}
\caption{Neural network architecture of the contextual DAG. The features $z$ pass through hidden layers to produce a simple directed graph $\tilde{W}$. The projection layer makes $\tilde{W}$ acyclic and sparse, resulting in a DAG $W^\star$.}
\label{fig:architecture}
\end{figure*}

To facilitate exposition, it is helpful to rewrite the structural equation model \eqref{eq:sem} more compactly as
\begin{equation}
\label{eq:sem2}
x=W(z)^\top x+\varepsilon,
\end{equation}
where the function $W(z):\mathbb{R}^m\to\mathrm{DAG}$ maps the contextual features $z$ onto the space of acyclic weighted adjacency matrices. More precisely, the sparsity pattern of the matrix produced by $W(z)$, corresponding to the set of edges, changes with $z$ under the constraint that the pattern never contains a directed cycle.

\subsection{Fixed DAG Problem}

Before considering the estimation problem corresponding to the model \eqref{eq:sem2}, it is useful to review the learning problem for a fixed DAG where $z$ is absent:
\begin{equation}
\label{eq:fixed}
\begin{split}
\min_{W\in\mathbb{R}^{p\times p}}\quad&\operatorname{E}_{x}\left[\|x-W^\top x\|_2^2\right] \\
\operatorname{s.t.}\quad&W\in\mathrm{DAG} \\
&\|W\|_{\ell_1}\leq\lambda.
\end{split}
\end{equation}
Here, $\|W\|_{\ell_1}:=\|\operatorname{vec}(W)\|_1$ is the sum of absolute values of the elements of $W$. The first constraint in \eqref{eq:fixed} states that $W$ cannot contain cycles (i.e., it must be acyclic), while the second constraint is an $\ell_1$ regularizer that promotes sparsity in $W$. A sparse $W$ implies a parsimonious graph with few edges.

\subsection{Contextual DAG Problem}

In the contextual setting, $W$ is no longer a fixed matrix but instead a function of the random variable $z$:
\begin{equation}
\label{eq:population}
\begin{split}
\min_{W\in\mathcal{F}}\quad&\operatorname{E}_{x,z}\left[\|x-W(z)^\top x\|_2^2\right] \\
\operatorname{s.t.}\quad&W(z)\in\mathrm{DAG}\text{ for all }z\in\mathbb{R}^m \\
&\operatorname{E}_z\left[\|W(z)\|_{\ell_1}\right]\leq\lambda,
\end{split}
\end{equation}
where the set $\mathcal{F}$ is some class of functions that define feasible solutions (e.g., neural networks). The DAG constraint is now a statement that the (random) matrix $W(z)$ must correspond to a DAG over the entire support of $z$. Said differently, for all realizations of $z$, the weighted adjacency matrix must be acyclic. The $\ell_1$ regularizer continues to impose sparsity, but now on the expectation of $W(z)$, meaning that the graph's sparsity can change. It may contain fewer edges for some realizations of $z$ and more edges for other $z$.

Given a sample of observations $(x_i,z_i)_{i=1}^n$, the data version of \eqref{eq:population} replaces the population expectations with their sample counterparts:
\begin{equation*}
\begin{split}
\min_{W\in\mathcal{F}}\quad&\frac{1}{n}\sum_{i=1}^n\|x_i-W(z_i)^\top x_i\|_2^2 \\
\operatorname{s.t.}\quad&W(z)\in\mathrm{DAG}\text{ for all }z\in\mathbb{R}^m \\
&\frac{1}{n}\sum_{i=1}^n\|W(z_i)\|_{\ell_1}\leq\lambda.
\end{split}
\end{equation*}
To realize this estimator, we take the function class $\mathcal{F}=\{W_\theta(z):\theta\in\mathbb{R}^d\}$ with $W_\theta(z)$ a certain architecture of feedforward neural networks parameterized by weights $\theta$. This choice results in our proposal:
\begin{equation}
\label{eq:sample}
\begin{split}
\min_{\theta\in\mathbb{R}^d}\quad&\frac{1}{n}\sum_{i=1}^n\|x_i-W_\theta(z_i)^\top x_i\|_2^2 \\
\operatorname{s.t.}\quad&W_\theta(z)\in\mathrm{DAG}\text{ for all }z\in\mathbb{R}^m \\
&\frac{1}{n}\sum_{i=1}^n\|W_\theta(z_i)\|_{\ell_1}\leq\lambda.
\end{split}
\end{equation}
Next, we present a novel neural network architecture that restricts the network's co-domain to the constraint set of \eqref{eq:sample}, a difficult task in general.

\subsection{Neural Network Architecture}

Figure~\ref{fig:architecture} illustrates our neural network architecture. The network has two main components: hidden layers and a projection layer. The hidden layers are standard linear transformations followed by non-linear activation functions (e.g., rectified linear units), except for the final hidden layer, which has no activation. The purpose of the hidden layers is to capture any non-linear effects of the contextual features. Their output is a simple directed graph in the form of a $p\times p$ weighted adjacency matrix, denoted $\tilde{W}$. Recall that a simple directed graph has directed edges but no self-loops, so the diagonal of $\tilde{W}$ is zero. Hence, the final hidden layer has $p\times(p-1)$ neurons. The network's second major component---the projection layer---is required because it is impossible to obtain an acyclic or sparse graph using only the hidden layers.

\subsection{Projection Layer}

The role of the projection layer is to transform the (dense) simple directed graph $\tilde{W}$ into a (sparse) DAG $W^\star$. During training, $n$ weighted adjacency matrices $\tilde{W}_1,\ldots,\tilde{W}_n$ are projected onto the feasible set each time a forward pass is performed. This projection is a solution to a constrained optimization problem:
\begin{equation}
\label{eq:proj}
\begin{split}
\underset{W_1,\ldots,W_n\in\mathbb{R}^{p\times p}}{\min}\quad&\frac{1}{2}\sum_{i=1}^n\|\tilde{W}_i-W_i\|_F^2 \\
\operatorname{s.t.}\quad & W_i\in\mathrm{DAG},\quad i=1,\ldots,n \\
&\frac{1}{n}\sum_{i=1}^n\|W_i\|_{\ell_1}\leq\lambda.
\end{split}
\end{equation}
The subscript $F$ in the objective function denotes the Frobenius norm. Due to the combinatorial DAG constraint, the feasible set of \eqref{eq:proj} is non-convex. The number of possible DAGs on $p$ nodes is super-exponential in $p$, limiting the scalability of exact combinatorial approaches. In our setting, where the projection operates on $n$ weighted adjacency matrices at every forward pass, scalability issues are only further exacerbated.

Recently, \textcite{Zheng2018}, \textcite{Bello2022}, and others proposed continuous characterizations of acyclicity in the form of $h(W)=0$ for some function $h(W)$ having the property
\begin{equation*}
h(W)=0\Longleftrightarrow W\in\mathrm{DAG}.
\end{equation*}
In particular, \textcite{Bello2022} showed that for all matrices $W$ whose element-wise square $W\circ W$ has spectral radius less than $s>0$, a well-suited choice of the function $h(W)$ is
\begin{equation}
\label{eq:hs}
h_s(W):=-\log\,\det(sI-W\circ W)+p\log(s).
\end{equation}
This characterization of acyclicity, known as DAGMA, allows us to replace the combinatorial DAG constraint with its equivalent log determinant characterization:
\begin{equation}
\label{eq:project}
\begin{split}
\underset{W_1,\ldots,W_n\in\mathbb{W}^s}{\min}\quad&\frac{1}{2}\sum_{i=1}^n\|\tilde{W}_i-W_i\|_F^2 \\
\operatorname{s.t.}\quad & h_s(W_i)=0,\quad i=1,\ldots,n \\
&\frac{1}{n}\sum_{i=1}^n\|W_i\|_{\ell_1}\leq\lambda,
\end{split}
\end{equation}
where $\mathbb{W}^s:=\{W\in\mathbb{R}^{p\times p}:\rho(W\circ W)<s\}$ and $\rho(\cdot)$ is the matrix's spectral radius. Though \eqref{eq:project} remains a non-convex problem, the constraint function $h_s(W)$ is continuous and differentiable, enabling applications of scalable first-order optimization methods.

\section{PROJECTION ALGORITHMS}
\label{sec:projection}

A forward pass through the neural network performs the projection \eqref{eq:project} onto the intersection of a non-convex set (the $\log\,\det$ level set) and a convex set (the $\ell_1$ ball). We split this projection into two steps for efficient computation: (1) projection onto the $\log\,\det$ level set and (2) projection onto the $\ell_1$ ball. Proposition~\ref{prop:proj} states that this approach yields a feasible solution to the original problem. Its proof is in Appendix~\ref{app:proj}.
\begin{proposition}
\label{prop:proj}
Let $\tilde{W}\in\mathbb{R}^{p\times p}$, $\hat{W}$ be the projection of $\tilde{W}$ onto the $\log\,\det$ level set, and $W^\star$ the projection of $\hat{W}$ onto the $\ell_1$ ball. Then $W^\star$ lies on the intersection of the $\log\,\det$ level set and the $\ell_1$ ball.
\end{proposition}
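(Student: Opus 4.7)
The plan is to verify two conditions for $W^\star$: first, the $\ell_1$ budget $\|W^\star\|_{\ell_1}\le\lambda$, and second, membership in the $\log\,\det$ level set, i.e., $W^\star\in\mathbb{W}^s$ with $h_s(W^\star)=0$. The first is immediate from the definition of $W^\star$ as the projection of $\hat{W}$ onto the $\ell_1$ ball of radius $\lambda$, so essentially all the work lies in the second.

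The key step is to recall that projecting a matrix onto an $\ell_1$ ball acts entrywise as a soft-thresholding operator. If $\|\hat{W}\|_{\ell_1}\le\lambda$ then $W^\star=\hat{W}$ and the claim is trivial; otherwise the projection takes the closed form $W^\star_{jk}=\mathrm{sign}(\hat{W}_{jk})\max(|\hat{W}_{jk}|-\tau,0)$ for a unique threshold $\tau>0$. In either case, $|W^\star_{jk}|\le|\hat{W}_{jk}|$ for every $(j,k)$, and in particular the support of $W^\star$ is contained in the support of $\hat{W}$.

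From this entrywise domination, both parts of the $\log\,\det$ characterization transfer from $\hat{W}$ to $W^\star$. Acyclicity depends only on the sparsity pattern, and shrinking entries toward zero (or setting some of them to zero) can only remove edges, never introduce new ones; so if $\hat{W}$ corresponds to a DAG then so does $W^\star$. Membership in $\mathbb{W}^s$ follows because $W^\star\circ W^\star$ is dominated entrywise by $\hat{W}\circ\hat{W}$ and both matrices are nonnegative, so Perron--Frobenius monotonicity gives $\rho(W^\star\circ W^\star)\le\rho(\hat{W}\circ\hat{W})<s$. Applying the equivalence of acyclicity with $h_s(W)=0$ on $\mathbb{W}^s$ recalled in the excerpt then yields $h_s(W^\star)=0$, as required.

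The whole argument is short once one recognizes that $\ell_1$ projection reduces to soft-thresholding; that is really the only nontrivial observation. The remaining ingredients --- that a subgraph of a DAG is a DAG, and that the spectral radius of a nonnegative matrix is entrywise monotone --- are standard. The main conceptual point is that the DAG constraint is monotone with respect to the sparsity pattern and $\ell_1$ projection respects this monotonicity, which is why the two-step scheme yields a feasible point without any further correction.
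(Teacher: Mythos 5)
Your proof is correct and follows essentially the same route as the paper: entrywise domination $|W^\star_{jk}|\leq|\hat{W}_{jk}|$ from soft-thresholding gives both preservation of acyclicity (the support can only shrink) and $\rho(W^\star\circ W^\star)\leq\rho(\hat{W}\circ\hat{W})<s$ via spectral-radius monotonicity for nonnegative matrices, which the paper proves from the Collatz--Wielandt formula while you invoke it as standard Perron--Frobenius theory. No gaps.
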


\subsection{\texorpdfstring{$\log\,\det$}{log det} Projection}

The optimization problem of the $\log\,\det$ projection readily separates into $n$ identical subproblems. We therefore focus on this subproblem for simplicity:
\begin{equation}
\label{eq:dag}
\begin{split}
\underset{W\in\mathbb{W}^s}{\min}\quad&\frac{1}{2}\|\tilde{W}-W\|_F^2 \\
\operatorname{s.t.}\quad & h_s(W)=0.
\end{split}
\end{equation}
The projection \eqref{eq:dag} is a complex problem for which no analytical solution is available. \textcite{Bello2022} devised a path-following algorithm for general $\log\,\det$ constrained problems that shifts the constraint function to the objective and then treats the original loss function as a penalty that is progressively annealed towards zero. In our setting, the objective function corresponding to this approach is
\begin{equation}
\label{eq:path}
f_{\mu,s}(W;\tilde{W}):=\frac{\mu}{2}\|\tilde{W}-W\|_F^2+h_s(W),
\end{equation}
where the coefficient $\mu>0$ is successively decreased along the path. Lemma 6 of \textcite{Bello2022} states that the limiting solution as $\mu\to0$ satisfies the original $\log\,\det$ constraint.

Algorithm~\ref{alg:logdet} uses the formulation \eqref{eq:path} to provide a method for projecting onto the $\log\,\det$ level set.

\begingroup
\removelatexerror
\begin{algorithm}[H]
\small
\caption{$\log\,\det$ projection}
\label{alg:logdet}
\SetKwInOut{Input}{input}
\SetKwInOut{Output}{output}
\Input{Adjacency matrix $\tilde{W}\in\mathbb{R}^{p\times p}$, $\log\,\det$ parameter $s>0$, path coefficient $\mu>0$, decay factor $\alpha\in(0,1)$, step count $T\in\mathbb{N}$}
Set $W^{(0)}\gets0$ \\
\For{$t=0,1,\ldots,T-1$}{
Initialize $W$ at $W^{(t)}$ and solve
\begin{equation}
\label{eq:inner}
W^{(t+1)}\gets\underset{W\in\mathbb{W}^s}{\arg\,\min}\quad f_{\mu,s}(W;\tilde{W})
\end{equation}
Set $\mu\gets\alpha\mu$
}
\Output{Adjacency matrix $\hat{W}=W^{(T)}$}
\end{algorithm}
\endgroup

In the experiments, we set the path coefficient $\mu=1$, decay factor $\alpha=0.5$, and step count $T=10$. These default values generally work well in our experience.

The performance of Algorithm~\ref{alg:logdet} depends critically on efficiently solving the inner optimization problem \eqref{eq:inner} at each step along the path, i.e., minimizing the objective function \eqref{eq:path} for a fixed value of $\mu$. Recall that the gradient of $\log\det(X)$ is $X^{-\top}$, so that
\begin{equation*}
\nabla h_s(W)=2(sI-W\circ W)^{-\top}\circ W,
\end{equation*}
where $h_s(W)$ is defined in \eqref{eq:hs}. The gradient $\nabla f_{\mu,s}(W;\tilde{W})$ of the objective function immediately follows. Using these gradients, the inner optimization problem can be solved by first-order methods such as gradient descent. To characterize the behavior of gradient descent on this problem, Theorem~\ref{thm:converge} presents its convergence properties. The proof is in Appendix~\ref{app:converge}.
\begin{theorem}
\label{thm:converge}
Let $W^{(0)}=0$, $\tilde{W}\in\mathbb{R}^{p\times p}$ with $|\tilde{\mathsf{w}}_{jk}|\leq1$, $s\geq1+\max(\|\tilde{W}\|_1,\|\tilde{W}\|_\infty)$, and $\mu>0$. Define the gradient descent update
\begin{equation*}
W^{(k+1)}=W^{(k)}-\frac{1}{\bar{c}}\nabla f_{\mu,s}(W^{(k)};\tilde{W}).
\end{equation*}
Then, for any $\bar{c}\geq c=\max(\mu/2,2\sqrt{p}+4p\|\tilde{W}\|_F)$, the sequence $\{f_{\mu,s}(W^{(k)};\tilde{W})\}_{k\in\mathbb{N}}$ decreases, converges, and satisfies the inequality
\begin{equation*}
\begin{split}
&f_{\mu,s}(W^{(k)};\tilde{W})-f_{\mu,s}(W^{(k+1)};\tilde{W})\geq \\
&\hspace{4cm}\frac{\bar{c}-c}{2}\|W^{(k+1)}-W^{(k)}\|_F^2.
\end{split}
\end{equation*}
\end{theorem}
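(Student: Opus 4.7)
My plan is to derive the claimed descent inequality as an application of the classical descent lemma for gradient methods on a smooth nonconvex function, with the main work going into (i) confining the iterates to a compact region of $\mathbb{W}^s$ on which $\nabla f_{\mu,s}$ admits an explicit Lipschitz-type bound and (ii) identifying this bound with the constant $c$. Convergence of the objective values will then follow by monotonicity together with $f_{\mu,s}\ge 0$.

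Step 1: a compact sublevel set. Observe that $W^{(0)}=0\in\mathbb{W}^s$ and $h_s(0)=-\log\det(sI)+p\log s=0$, so $f_{\mu,s}(0;\tilde{W})=\frac{\mu}{2}\|\tilde{W}\|_F^2$. I would work on the sublevel set $\Omega:=\{W\in\mathbb{W}^s:f_{\mu,s}(W;\tilde{W})\le \frac{\mu}{2}\|\tilde{W}\|_F^2\}$. Because $h_s\ge 0$ on $\mathbb{W}^s$ (writing $\log\det(sI-W\circ W)=p\log s+\log\det(I-W\circ W/s)$ and noting $\log\det(I-M/s)\le 0$ for entrywise nonnegative $M$ with $\rho(M)<s$, since $\operatorname{tr}(M^k)\ge 0$), and because $h_s(W)\to+\infty$ as $\rho(W\circ W)\to s^-$, the set $\Omega$ is compact and strictly interior to $\mathbb{W}^s$; the quadratic piece also bounds $\|W\|_F$ uniformly on $\Omega$. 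Combined with $s\ge 1+\max(\|\tilde{W}\|_1,\|\tilde{W}\|_\infty)$ and $|\tilde{\mathsf{w}}_{jk}|\le 1$ (so that $\rho(\tilde{W}\circ\tilde{W})\le\min(\|\tilde{W}\|_1,\|\tilde{W}\|_\infty)$ via $\tilde{\mathsf{w}}_{jk}^2\le|\tilde{\mathsf{w}}_{jk}|$), this yields a uniform spectral gap $s-\rho(W\circ W)\ge 1$ on $\Omega$, which is the ingredient enabling the Lipschitz bound in terms of $\tilde{W}$ alone.

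Step 2: the Lipschitz bound. Differentiating
\[\nabla f_{\mu,s}(W;\tilde{W})=\mu(W-\tilde{W})+2(sI-W\circ W)^{-\top}\circ W\]
and bounding the Jacobian in operator norm via the uniform spectral gap and the Frobenius bound on $\|W\|_F$ from step 1, the quadratic piece contributes a $\mu$-type term while the log-determinant piece contributes an $O(\sqrt{p}+p\|\tilde{W}\|_F)$ term, the $\sqrt{p}$ tracing to $\|I\|_F=\sqrt{p}$ and the $p\|\tilde{W}\|_F$ to the Hadamard/inverse structure of $\nabla h_s$ combined with the sublevel-set bound on $\|W\|_F$. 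Packaging these contributions into $c=\max(\mu/2,\,2\sqrt{p}+4p\|\tilde{W}\|_F)$ is the delicate bookkeeping step. Equipped with this bound, the induction that every iterate stays in $\Omega$ closes: the descent inequality of step 3 preserves sublevel-set membership, starting from $W^{(0)}\in\Omega$.

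Step 3: descent and convergence. The standard quadratic upper bound on $\Omega$ gives
\[f_{\mu,s}(W^{(k+1)};\tilde{W})\le f_{\mu,s}(W^{(k)};\tilde{W})+\langle\nabla f_{\mu,s}(W^{(k)};\tilde{W}),\,W^{(k+1)}-W^{(k)}\rangle+\tfrac{c}{2}\|W^{(k+1)}-W^{(k)}\|_F^2,\]
and plugging in the update identity $\nabla f_{\mu,s}(W^{(k)};\tilde{W})=\bar{c}(W^{(k)}-W^{(k+1)})$ yields $f_{\mu,s}(W^{(k)};\tilde{W})-f_{\mu,s}(W^{(k+1)};\tilde{W})\ge(\bar{c}-c/2)\|W^{(k+1)}-W^{(k)}\|_F^2$, which dominates the asserted $\tfrac{\bar{c}-c}{2}\|W^{(k+1)}-W^{(k)}\|_F^2$ since $\bar{c}\ge 0$. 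Monotonicity plus the uniform lower bound $f_{\mu,s}\ge 0$ on $\mathbb{W}^s$ then gives convergence. The main obstacle is step 2: obtaining the explicit Lipschitz form of $c$ via matrix calculus on $(sI-W\circ W)^{-1}$ while carrying the Hadamard-product structure through, which is where the hypotheses on $s$ and $\tilde{W}$ do their real work.
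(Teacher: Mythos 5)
Your overall skeleton (restrict to an invariant region, prove a Lipschitz bound for $\nabla f_{\mu,s}$ there with constant $c$, then apply the descent lemma and conclude via monotonicity plus $f_{\mu,s}\geq 0$) matches the paper's, and your Step~3 arithmetic is correct and even slightly sharper than the stated bound. However, there is a genuine gap in Step~1, which is precisely where the paper does its real work. You propose to confine the iterates to the sublevel set $\Omega=\{W\in\mathbb{W}^s: f_{\mu,s}(W;\tilde{W})\leq \frac{\mu}{2}\|\tilde{W}\|_F^2\}$ and assert a ``uniform spectral gap $s-\rho(W\circ W)\geq 1$ on $\Omega$.'' The justification you give ($\rho(\tilde{W}\circ\tilde{W})\leq\min(\|\tilde{W}\|_1,\|\tilde{W}\|_\infty)\leq s-1$) applies only to $\tilde{W}$ itself, not to arbitrary $W\in\Omega$: membership in $\Omega$ controls only $h_s(W)$ and $\|W-\tilde{W}\|_F$ (hence $\|W\|_F\leq 2\|\tilde{W}\|_F$), which gives no elementwise or row-sum control on $W$, so $\rho(W\circ W)$ can approach $s$ far closer than distance $1$ (the gap you actually get from the level value degrades with $\mu$ and $\|\tilde{W}\|_F$ and is not $\geq 1$). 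Without that gap, the $O(\sqrt{p}+p\|\tilde{W}\|_F)$ Lipschitz bound in your Step~2 cannot be closed with the stated constant $c$. A secondary problem: $\Omega$ is not convex (the $h_s$ term is nonconvex), so even granting $W^{(k)},W^{(k+1)}\in\Omega$, the descent lemma requires the Lipschitz bound along the connecting segment, which need not lie in $\Omega$; and your induction (``the descent inequality preserves sublevel-set membership'') is circular, since the descent inequality is what you are trying to establish.

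The paper avoids both issues by taking the invariant region to be the \emph{convex box} $\mathcal{W}=\{W:|\mathsf{w}_{jk}|\leq|\tilde{\mathsf{w}}_{jk}|\}$ and proving a separate invariance result (its Lemma~\ref{lemma:bounded}): starting from $W^{(0)}=0$, each gradient step stays in $\mathcal{W}$. That proof uses sign structure you do not exploit at all --- namely that $sI-W\circ W$ is a nonsingular $M$-matrix so $\nabla h_s(W)\geq 0$ elementwise, pushing the iterate down, while the quadratic term moves it toward $\tilde{W}$ with step $\mu/(2c)\leq 1$, keeping it below $\tilde{W}$. Elementwise domination by $\tilde{W}$ then yields diagonal dominance of $sI-W\circ W$ with factor at least $1$ (via $s\geq 1+\max(\|\tilde{W}\|_1,\|\tilde{W}\|_\infty)$ and $|\tilde{\mathsf{w}}_{jk}|\leq 1$), hence $\|(sI-W\circ W)^{-1}\|_2\leq 1$ by Varah's bound, and also the bounds $\|W\|_F\leq\|\tilde{W}\|_F$ and $|\mathsf{w}_{jk}|\leq 1$ used to obtain $c=\max(\mu/2,2\sqrt{p}+4p\|\tilde{W}\|_F)$. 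You would need to replace your sublevel-set argument with an invariance argument of this kind for the proof to go through.
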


Theorem~\ref{thm:converge} establishes that gradient descent applied to the inner optimization problem \eqref{eq:inner} generates a convergent sequence of objective values. This result is not trivial to prove and relies on showing Lipschitz continuity of $f_{\mu,s}(W;\tilde{W})$ under the theorem's conditions. The condition $|\tilde{\mathsf{w}}_{jk}|\leq1$ is readily satisfied by scaling the algorithm's inputs (and rescaling its outputs).

The step size $1/\bar{c}$ implied by Theorem~\ref{thm:converge} can be quite small in practice, slowing convergence. Likewise, the implied $\log\,\det$ parameter $s$ can be quite large, also slowing convergence. It is unclear if these conservative values are an artifact of the proof strategy. Our numerical experience is that a more aggressive step size with $\bar{c}=p$ and $s=1$ typically leads to convergence.

\subsection{\texorpdfstring{$\ell_1$}{l1} Projection}

Let $\hat{W}_1,\ldots,\hat{W}_n$ be the output of the $\log\,\det$ projection. With these matrices, the $\ell_1$ projection solves
\begin{equation}
\label{eq:l1}
\begin{split}
\underset{W_1,\ldots,W_n\in\mathbb{R}^{p\times p}}{\min}\quad&\frac{1}{2}\sum_{i=1}^n\|\hat{W}_i-W_i\|_F^2 \\
\operatorname{s.t.}\quad &\frac{1}{n}\sum_{i=1}^n\|W_i\|_{\ell_1}\leq\lambda.
\end{split}
\end{equation}
It is straightforward to show that the projection \eqref{eq:l1} is solved by thresholding $\hat{W}_1,\ldots,\hat{W}_n$ element-wise using
\begin{equation*}
S_\kappa(\mathsf{w}):=\operatorname{sign}(\mathsf{w})\max(|\mathsf{w}|-\kappa,0),
\end{equation*}
where the threshold $\kappa\geq0$ is derived from $\hat{W}_1,\ldots,\hat{W}_n$. \textcite{Duchi2008} provide a non-iterative algorithm involving only a few low-complexity operations for computing $\kappa$ in the case of vector arguments. As described in Appendix~\ref{app:l1}, their algorithm readily extends to matrix arguments. The computation of $\kappa$ is performed only during training. For inference, the $\kappa$ from the training set is used. This projection yields the neural network's final output $W_1^\star,\ldots,W_n^\star$.

\section{SCALABLE COMPUTATION}
\label{sec:scalable}

\begin{figure*}[t]
\centering
\begin{subfigure}[t]{0.4\linewidth}
\centering
\includegraphics[width=\linewidth]{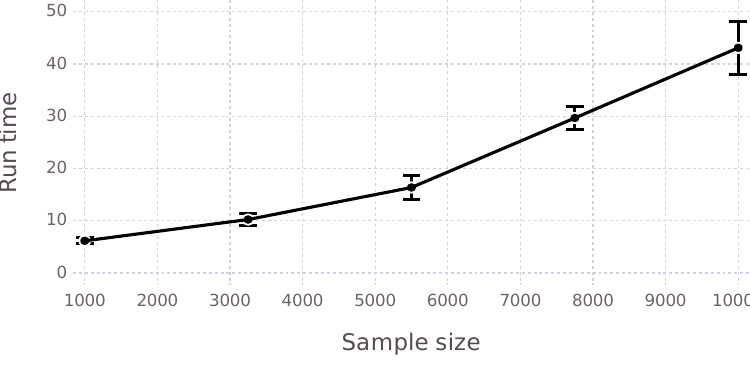}
\end{subfigure}
\begin{subfigure}[t]{0.4\linewidth}
\centering
\includegraphics[width=\linewidth]{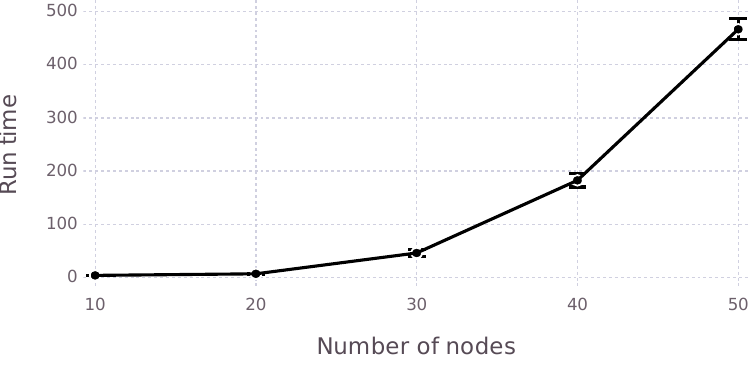}
\end{subfigure}
\caption{Run times in seconds for 10 epochs of a contextual DAG on an NVIDIA RTX 4090 over 10 synthetic datasets. The number of nodes $p=20$ in the left plot and the sample size $n=1000$ in the right plot. The number of contextual features $m=2$. The solid points are averages and the error bars are one standard errors.}
\label{fig:timings}
\end{figure*}

Even with a first-order method for the projection layer, scaling contextual DAGs to large sample sizes requires careful treatment of the forward and backward passes through the network. We now propose some novel, efficient methods and evaluate their scalability.

\subsection{Forward Pass}

At each forward pass during training, the projection layer acts on all $n$ observations in the sample. These projections, in turn, each involve inverting a $p\times p$ matrix at every gradient descent iteration, which has complexity $O(p^3)$. Though cubic complexity is typical of existing structure learning methods \parencite[e.g.,][]{Zheng2018,Bello2022}, it makes it prohibitive to carry out the projections in sequence, as a naive implementation might do. For this reason, we implement Algorithm~\ref{alg:logdet} as a batched solver that handles all $n$ problems in parallel. Batched BLAS \parencite{Dongarra2017} and its CUDA equivalent expose batched linear algebra routines and, critically, batched linear algebra solvers to perform parallel matrix inversions. Via these routines, our implementation can perform forward passes in reasonable run times for $n$ in the order of thousands or tens of thousands.

\subsection{Backward Pass}

The second consideration during training is the backward pass, where gradients are propagated backward through the network. The presence of the projection layer complicates this step because we require the gradients of the output $W_1^\star,\ldots,W_n^\star$ from this layer with respect to the input $\tilde{W}_1,\ldots,\tilde{W}_n$ to backpropagate successfully. Despite the layer involving a complex optimization problem, Theorem~\ref{thm:gradient} gives an analytical expression for its gradients. The proof is in Appendix~\ref{app:gradient}.
\begin{theorem}
\label{thm:gradient}
Let $\tilde{W}_1,\ldots,\tilde{W}_n\in\mathbb{R}^{p\times p}$. Define $W_1^\star,\ldots,W_n^\star\in\mathbb{R}^{p\times p}$ as a solution to the projection \eqref{eq:project}. Let $\mathcal{A}:=\{(i,j,k):\mathsf{w}^\star_{ijk}\neq0\}$ be the active set of edges, where $\mathsf{w}_{ijk}^\star$ is the element of $W_i^\star$ at the $j$th row and $k$th column. Then, if the $\ell_1$ constraint is binding (i.e., $n^{-1}\sum_{i=1}^n\|W_i^\star\|_{\ell_1}=\lambda$), the gradient of $\mathsf{w}_{ijk}^\star$ with respect to $\tilde{\mathsf{w}}_{lqr}$ is given by
\begin{equation*}
\begin{split}
\frac{\partial\mathsf{w}_{ijk}^\star}{\partial\tilde{\mathsf{w}}_{lqr}}=
\begin{dcases}
\delta_{ijk}^{lqr}-\frac{\operatorname{sgn}(\mathsf{w}_{lqr}^\star)\operatorname{sgn}(\mathsf{w}_{ijk}^\star)}{|\mathcal{A}|} & \text{if }(i,j,k)\in\mathcal{A} \\
0 & \text{otherwise},
\end{dcases}
\end{split}
\end{equation*}
where $\delta_{ijk}^{lqr}:=1[(i,j,k)=(l,q,r)]$ and $|\mathcal{A}|$ is the cardinality of $\mathcal{A}$. If the $\ell_1$ constraint is not binding, the gradient is given by
\begin{equation*}
\frac{\partial\mathsf{w}_{ijk}^\star}{\partial\tilde{\mathsf{w}}_{lqr}}=
\begin{cases}
\delta_{ijk}^{lqr} & \text{if }(i,j,k)\in\mathcal{A} \\
0 & \text{otherwise}.
\end{cases}
\end{equation*}
\end{theorem}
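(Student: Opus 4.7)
The plan is to derive the formula by implicit differentiation of the KKT conditions, after first reducing \eqref{eq:project} to a simpler equivalent program in which the log-det constraint disappears. The key observation is that whenever the support of a weighted matrix $W$ forms a DAG, $W\circ W$ is nilpotent and hence $h_s(W)=-\log\det(sI-W\circ W)+p\log s=0$ holds automatically. Let $\mathcal{A}_i:=\{(j,k):(i,j,k)\in\mathcal{A}\}$ be the per-sample active set; since each $W_i^\star$ is a DAG, any matrix supported on $\mathcal{A}_i$ also satisfies $h_s=0$ and lies in $\mathbb{W}^s$. Consequently $W_1^\star,\ldots,W_n^\star$ is the unique optimum of the strictly convex restricted program
\begin{equation*}
\min_{W_i}\tfrac12\sum_i\|\tilde{W}_i-W_i\|_F^2\;\text{s.t.}\;\operatorname{supp}(W_i)\subseteq\mathcal{A}_i,\;\tfrac{1}{n}\sum_i\|W_i\|_{\ell_1}\leq\lambda.
\end{equation*}
Assuming the active set is locally stable under perturbations of $\tilde{W}$ (the standard non-degeneracy hypothesis), the same equivalence holds in a neighborhood, so the gradients $\partial w^\star_{ijk}/\partial\tilde{w}_{lqr}$ coincide with those of the restricted program, which has only the $\ell_1$ multiplier to track.

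In the binding case, KKT stationarity of the restricted program with $\ell_1$ multiplier $\gamma\ge 0$ yields, for each $(i,j,k)\in\mathcal{A}$,
\begin{equation*}
w_{ijk}^\star=\tilde{w}_{ijk}-\tfrac{\gamma}{n}\operatorname{sgn}(w_{ijk}^\star).
\end{equation*}
Multiplying by $\operatorname{sgn}(w_{ijk}^\star)$, summing over $\mathcal{A}$, and using the binding equality $n^{-1}\sum_{\mathcal{A}}|w^\star_{ijk}|=\lambda$ solves for
\begin{equation*}
\gamma=\frac{n}{|\mathcal{A}|}\!\left(\sum_{(i,j,k)\in\mathcal{A}}\operatorname{sgn}(w^\star_{ijk})\,\tilde{w}_{ijk}-n\lambda\right).
\end{equation*}
Differentiating this closed form in $\tilde{w}_{lqr}$ (treating the signs as locally constant by active-set stability) gives $\partial\gamma/\partial\tilde{w}_{lqr}=(n/|\mathcal{A}|)\operatorname{sgn}(w^\star_{lqr})$, which vanishes whenever $(l,q,r)\notin\mathcal{A}$. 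Substituting back into the derivative of the stationarity identity produces the announced formula on $\mathcal{A}$. For $(i,j,k)\notin\mathcal{A}$, active-set stability gives $w^\star_{ijk}\equiv 0$ in a neighborhood of $\tilde{W}$, so the partial derivative is zero. The non-binding case is identical with $\gamma\equiv 0$, reducing the stationarity identity to $w^\star_{ijk}=\tilde{w}_{ijk}$ on $\mathcal{A}$.

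The main obstacle is justifying the reduction to the restricted program, because \eqref{eq:project} is non-convex through the log-det constraint and a perturbation of $\tilde{W}$ could, in principle, change the active set. Two facts need to be stated carefully to make this step airtight. First, any matrix supported on $\bigcup_i\mathcal{A}_i$ has $h_s=0$ and $\rho(W\circ W)=0<s$, so the restricted program's feasible set sits inside that of \eqref{eq:project}; combined with strict convexity of the Frobenius objective, this forces $W^\star$ to be the unique restricted-program solution. Second, under strict complementarity at inactive coordinates---namely $|\tilde{w}_{lqr}|<\gamma/n$ for $(l,q,r)\notin\mathcal{A}$, which follows from the original KKT stationarity after using that $[\nabla h_s(W^\star_i)]_{qr}=2[(sI-W^\star_i\circ W^\star_i)^{-\top}]_{qr}w^\star_{iqr}$ vanishes on inactive entries---the conditions defining $\mathcal{A}$ persist in a neighborhood of $\tilde{W}$, so the implicit function theorem applies to the restricted KKT system. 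With these two points in place, the remaining calculations are routine subgradient calculus.
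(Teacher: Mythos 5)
Your proposal is correct and its computational core coincides with the paper's: both arguments implicitly differentiate the KKT system of the $\ell_1$-constrained projection restricted to the active set, treat the acyclicity constraint as determining only the support, and recover $\partial\nu^\star/\partial\tilde{\mathsf{w}}_{lqr}=\operatorname{sgn}(\mathsf{w}^\star_{lqr})/|\mathcal{A}|$ (your $\gamma/n$ plays the role of the paper's $\nu^\star$). The differences are in justification and algebra rather than in the result. The paper simply asserts that the DAG constraint ``does not have any effect on the magnitude of the edge weights'' and then differentiates the stationarity and complementary-slackness conditions jointly, solving the resulting linear system for the multiplier's derivative; you instead first reduce \eqref{eq:project} locally to a convex program restricted to $\operatorname{supp}(W_i)\subseteq\mathcal{A}_i$, using the observation that $W\circ W$ is nilpotent on any DAG support so that $h_s(W)=0$ and $\rho(W\circ W)=0<s$ hold automatically, and then solve for the multiplier $\gamma$ in closed form before differentiating. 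Your route buys a cleaner justification of the step the paper glosses over --- in particular it explains why the log-det multiplier exerts no force on inactive entries (since $[\nabla h_s(W^\star_i)]_{qr}\propto\mathsf{w}^\star_{iqr}=0$ there) and makes explicit the active-set stability / strict-complementarity hypothesis that both proofs actually need for the derivative to exist; the paper's route avoids introducing the auxiliary restricted program and reaches the formula with slightly less setup. Your closed-form expression for $\gamma$ and its derivative check out, so the remaining subgradient calculus is indeed routine.
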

Theorem~\ref{thm:gradient} states that the elements of $W_1^\star,\ldots,W_n^\star$ that are zero (i.e., the non-existent edges) have gradient zero, while the gradients of the remaining non-zero elements depend on their respective signs. The dependence vanishes as the number of non-zero elements grows. Importantly, the gradient only requires knowledge of the projection layer's output $W_1^\star,\ldots,W_n^\star$. Once the forward pass produces these outputs, the gradient needed for the backward pass comes with virtually no additional computation. The alternative, which is to automatically differentiate through the layer's gradient descent routine, is exceptionally expensive due the algorithm's iterative nature.

\subsection{Complexity}

With a fixed number of hidden-layer neurons, the forward or backward pass through the hidden layers, which input $m$ contextual features and output a $p\times p$ matrix, takes $O(nm+np^2)$ operations. A forward pass through the projection layer takes $O(np^3)$ operations, while the backward pass only takes $O(np^2)$ operations (based on Theorem~\ref{thm:gradient}). The total complexity of training the network is $O(nm+np^3)$. Hence, the training time is linear in the sample size $n$ and the number of contextual features $m$, and cubic in the number of nodes $p$. We emphasize again that cubic complexity on $p$ is typical of continuous structure learning algorithms. Figure~\ref{fig:timings} plots the average run times for training a contextual DAG as a function of $n$ and $p$. The run time as a function of $n$ is indeed fairly linear. The run time as a function of $p$ is non-linear but not quite cubic. Of course, cubic complexity is worst-case.

\subsection{Implementation}

Our algorithmic framework is provided in the \texttt{Julia} \parencite{Bezanson2017} implementation \texttt{ContextualDAG} using the deep learning library \texttt{Flux} \parencite{Innes2018}. We devise a custom pathwise optimization strategy to speed up training, described in Appendix~\ref{app:pathwise}. \texttt{ContextualDAG} is available at
\begin{center}
\rurl{github.com/ryan-thompson/ContextualDAG.jl}.
\end{center}

\section{RELATED WORK}
\label{sec:related}

To the best of our knowledge, no existing methods are available for learning DAGs with varying structure as general as our proposal. The closest work to ours is \textcite{Ni2019}, whose approach also allows some components of the DAG structure to vary as a function of external features, albeit with two crucial differences. First, the approach in \textcite{Ni2019} assumes the ordering of the nodes is fixed and known by the user. In contrast, our approach implicitly learns this ordering from the data directly and does not assume it is fixed. Second, \textcite{Ni2019} learn the map from the contextual features to the DAG using splines rather than using a neural network as we do. Though splines are computationally appealing, they do not generalize well beyond low-complexity function classes (i.e., smooth functions), which can lead them to underperform relative to neural networks in practice \parencite[see, e.g.,][]{Agarwal2021}.

Several other papers have explored ideas that relate more generally to contextual DAGs. \textcite{Boutilier1996} studied the notion of ``context-specific independence'' wherein dependencies between nodes in a DAG (of fixed topological ordering) can depend on the states of other nodes. \textcite{Geiger1996} considered a related class of models---similarity networks and multinets---which also represent varying independencies. \textcite{Oates2016} devised a method for learning subject-specific graphs whereby multiple graphs are trained simultaneously under a regularizer that penalizes distances between the graphs. A dependency network is used to describe the relations between subjects. In another line of work, \textcite{Ahmed2022} proposed semantic probabilistic layers for neural networks, which could be applied to impose constraints such as acyclicity to produce context-specific DAGs. However, their approach requires the true structure during training, which is unavailable in our setting.

Undirected graphical models with varying structure have received more attention than their directed counterparts. See the recent papers \textcite{Ni2022,Niu2023,Zhang2023}, and references therein. Whereas we constrain our model's co-domain to acyclic matrices, undirected graphical models that vary with external features typically constrain the co-domain to positive definite matrices. This constraint is comparatively straightforward to handle as it does not require iterative algorithms that are necessary for enforcing acyclicity. These works also focus on low-complexity (linear or piecewise linear) maps from the external features to graphs, restricting their expressiveness. We are unaware of any existing work on graphical models that employ neural networks for this task. However, \textcite{Thompson2023} showed that neural networks can successfully incorporate external features into non-graphical (sparse linear) models.

More broadly, our paper complements the literature on fixed DAG learning. \textcite{Zheng2018} introduced the NOTEARS framework, which was the first to provide a continuous characterization of the DAG constraint. Subsequent work, including DAGMA \parencite{Bello2022}---used in this paper---as well as \textcite{Ng2020,Yu2021,Gillot2022} refined the continuous optimization approach. See also the extensions and applications in \textcite{Yu2019,Lachapelle2020,Pamfil2020,Geffner2022,Gong2023}. The appeal of continuous algorithms is that they scale gracefully to large graphs and can be augmented with more complex combinatoric approaches if needed \parencite{Manzour2021,Deng2023}. Our work sits parallel to this line of research. We build a neural network architecture around these algorithms to enable the otherwise fixed structure of DAGs to vary flexibly with contextual features.

\section{SYNTHETIC EXPERIMENTS}
\label{sec:synthetic}

\begin{figure*}[t]
\centering
\begin{subfigure}[t]{\linewidth}
\centering
\includegraphics[width=0.8\linewidth]{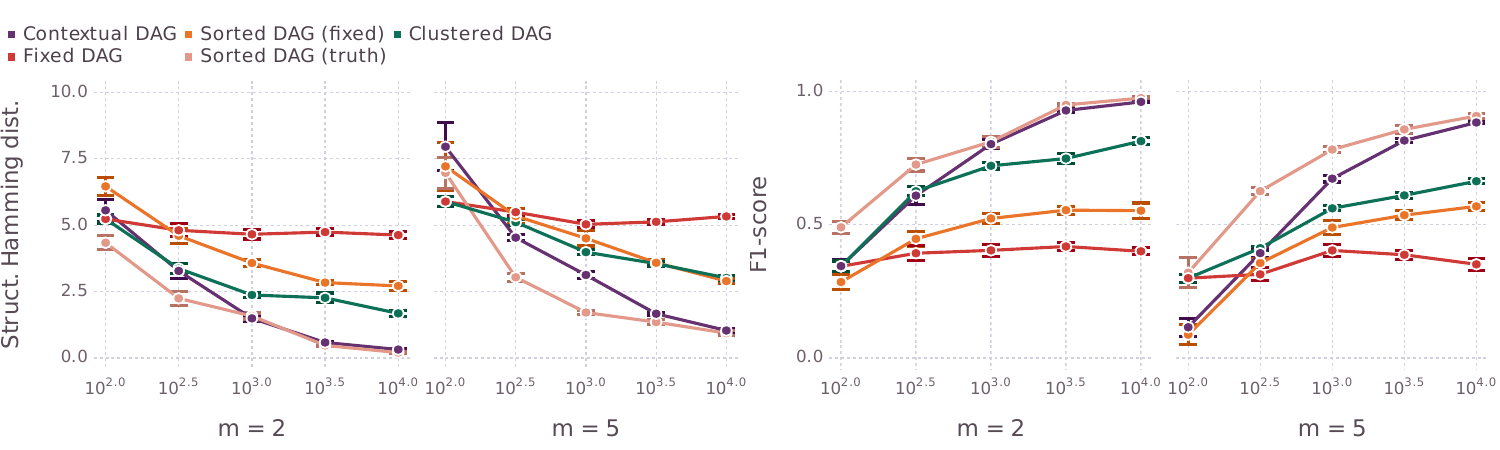}
\caption{Structure recovery as a function of the sample size $n$ for $m\in\{2,5\}$ contextual features.}
\label{fig:erdos-renyi-n}
\end{subfigure}
\begin{subfigure}[t]{\linewidth}
\centering
\includegraphics[width=0.8\linewidth]{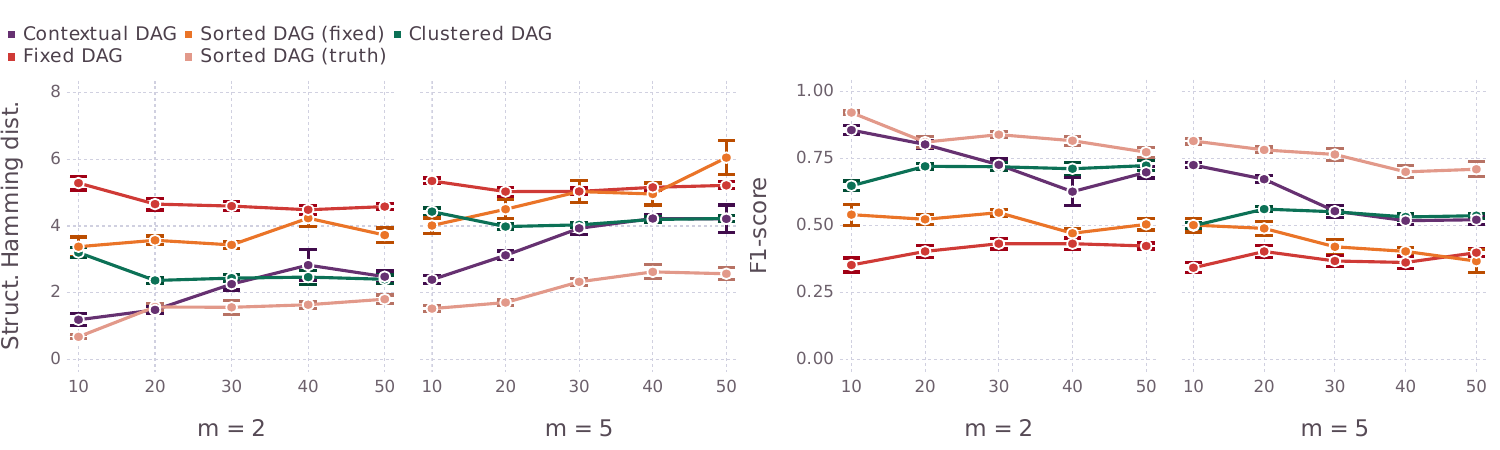}
\caption{Structure recovery as a function of the number of nodes $p$ for $m\in\{2,5\}$ contextual features.}
\label{fig:erdos-renyi-p}
\end{subfigure}\caption{Structure recovery performance on varying Erdős-Rényi graphs over 10 synthetic datasets. The number of nodes $p=20$ in the top row and the sample size $n=1000$ in the bottom row. The solid points are averages and the error bars are one standard errors. The sorted DAG (truth) uses the ground truth topological order.}
\label{fig:erdos-renyi}
\end{figure*}

Here, we evaluate our approach on synthetic data. We compare against a fixed DAG that estimates a single graph over the entire sample. We also include two ``sorted DAGs'' that allow the graph structure to vary but assume the ordering of the nodes is given. The first uses a fixed topological order \parencite[as in][]{Ni2019} taken from the estimated fixed DAG. The second uses the \textit{true} varying topological order, representing an idealized case and a bound on what our approach can achieve. The two sorted DAGs employ the same architecture as the contextual DAG but use binary masking matrices rather than an acyclicity projection to encode the topological orderings. Finally, we include a ``clustered DAG'' that assigns observations on $z$ into $\lceil n/100\rceil$ clusters using $k$-means and fits a fixed DAG to each cluster. All approaches use the DAGMA acyclicity characterization of \textcite{Bello2022}. Appendix~\ref{app:implementation} provides the details of their implementation.

 We generate synthetic datasets according to the structural equation model \eqref{eq:sem2} using Erdős-Rényi and scale-free graphs where the edge weights, node ordering, and number of edges vary with the contextual features. Appendix~\ref{app:synthetic} details the full simulation design. As structure recovery metrics, we measure the structural Hamming distance and F1-score averaged over all the graphs predicted for a testing set generated independently and identically to the training set.

Figure~\ref{fig:erdos-renyi-n} reports the results as function of $n$ for the Erdős-Rényi graphs. Besides the true sorted DAG, the contextual DAG is the only method to recover the context-specific graphs accurately as $n$ grows large. Its neural network--enabled by the projection layer---is increasingly able to well-approximate the true function. The gap between the contextual DAG and true sorted DAG, representing the cost of not knowing the topological order a priori, vanishes with growing $n$.

The fixed sorted DAG, which allows the edge weights and sparsity of the graph to change but not the topological ordering, also improves with $n$ but at a slower rate. However, its performance eventually plateaus well-short of the contextual DAG due to its inability to learn the varying ordering. The clustered DAG, which fits discrete DAGs to different clusters of $z$, likewise underperforms our continuous approach. Unsurprisingly, the fixed DAG fails to improve as $n$ increases.

Figure~\ref{fig:erdos-renyi-p} plots the results for Erdős-Rényi graphs as a function of $p$. All approaches naturally witness worsening performance as $p$ grows. In any case, the contextual DAG remains highly competitive with both sorted DAGs, the clustered DAG, and the fixed DAG.

Appendix~\ref{app:scale-free} reports the experiments for the scale-free graphs. Additional experiments for a setting where the contextual features are irrelevant (i.e., the fixed DAG's home court) are available in Appendix~\ref{app:fixed}.

\section{DRUG CONSUMPTION DATASET}
\label{sec:drug}

\begin{figure*}[t]
\centering
\begin{subfigure}[t]{0.3\linewidth}
\centering
\includegraphics[width=\linewidth]{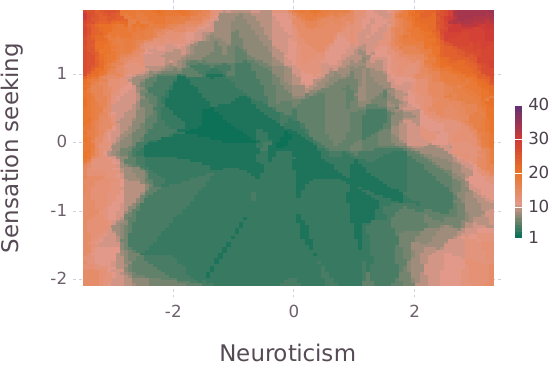}
\caption{Number of edges}
\label{fig:drug-sparsity}
\end{subfigure}
\begin{subfigure}[t]{0.3\linewidth}
\centering
\resizebox{0.8\linewidth}{!}{\includegraphics{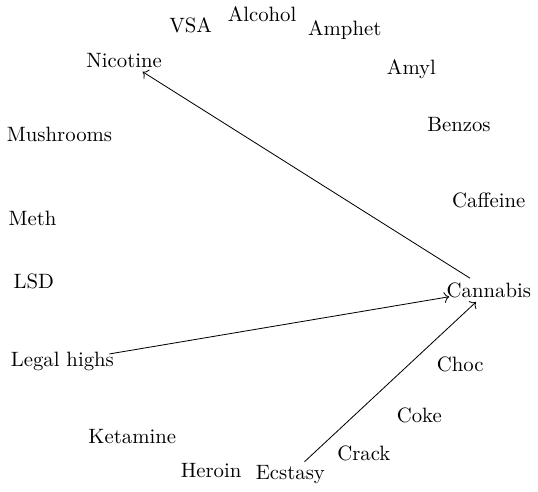}}
\caption{Low-scoring graph}
\label{fig:drug-g1}
\end{subfigure}
\begin{subfigure}[t]{0.3\linewidth}
\centering
\resizebox{0.8\linewidth}{!}{\includegraphics{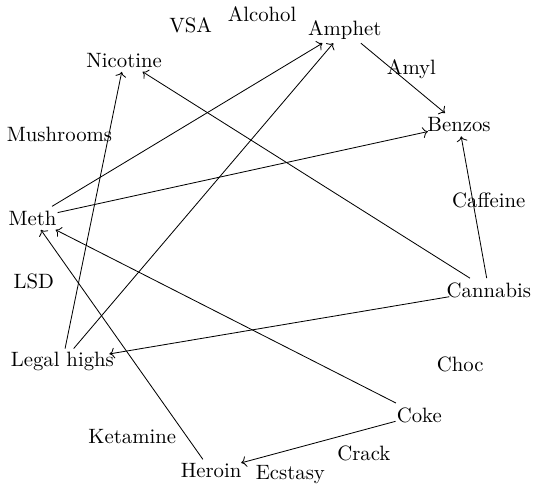}}
\caption{High-scoring graph}
\label{fig:drug-g2}
\end{subfigure}
\caption{The contextual DAG from the drug consumption dataset. The left plot is the graph sparsity as a function of the neuroticism/sensation seeking scores. The other plots are the graphs with the scores at low levels (0.1 quantiles) and high levels (0.9 quantiles). The parameter $\lambda$ is set to attain five edges on average over $z$.}
\end{figure*}

We consider a dataset from \textcite{Fehrman2017} on the recreational drug consumption patterns of $n=1885$ survey participants. The dataset includes measurements on the consumption of $p=18$ illicit and non-illicit drugs in terms of recency of use. In addition to these variables, the survey also captured the participants' personality characteristics. \textcite{Fehrman2017} reported neuroticism and sensation-seeking as the two most important determinants of drug consumption. Using these $m=2$ characteristics as contextual features, we apply the contextual DAG to predict personalized graphs of consumption dependencies.

Figure~\ref{fig:drug-sparsity} shows how the graph sparsity changes with the neuroticism and sensation seeking scores. Low and moderate scores yield sparser graphs than high scores, suggesting that individuals with atypical characteristics exhibit more complex and interconnected consumption patterns. Figures~\ref{fig:drug-g1} and \ref{fig:drug-g2} present the actual predicted graphs for two individuals: one with low scores and another with high scores. In the high-scoring graph, ``softer'' drugs like cannabis act as important nodes influencing the use of other substances. The edge between legal highs and cannabis is orientated differently in each graph, further indicating nuanced variation. Several edges in the high-scoring graph correspond to significant correlations identified in \textcite{Fehrman2017}, e.g., the edges between heroin and cocaine and methadone. The findings underscore the need for individualized risk mitigation strategies.

\section{SUMMARY}
\label{sec:summary}

Our paper introduces contextual DAGs, which relax the rigidity of fixed DAGs by allowing the graph structure to vary as a function of contextual features. A novel projection layer, for which we provide a convergence analysis and analytical gradients, allows us to learn neural networks that predict context-specific DAGs. An experimental analysis suggests that our approach can recover the true context-specific graph where other approaches fail. Our \texttt{Julia} implementation \texttt{ContextualDAG} is made publicly available.

\printbibliography

\section*{Checklist}

\begin{enumerate}

\item For all models and algorithms presented, check if you include:
\begin{enumerate}
\item A clear description of the mathematical setting, assumptions, algorithm, and/or model. [Yes]
\item An analysis of the properties and complexity (time, space, sample size) of any algorithm. [Yes]
\item (Optional) Anonymized source code, with specification of all dependencies, including external libraries. [Yes]
\end{enumerate}

\item For any theoretical claim, check if you include:
\begin{enumerate}
\item Statements of the full set of assumptions of all theoretical results. [Yes]
\item Complete proofs of all theoretical results. [Yes]
\item Clear explanations of any assumptions. [Yes]
\end{enumerate}

\item For all figures and tables that present empirical results, check if you include:
\begin{enumerate}
\item The code, data, and instructions needed to reproduce the main experimental results (either in the supplemental material or as a URL). [Yes]
\item All the training details (e.g., data splits, hyperparameters, how they were chosen). [Yes]
\item A clear definition of the specific measure or statistics and error bars (e.g., with respect to the random seed after running experiments multiple times). [Yes]
\item A description of the computing infrastructure used. (e.g., type of GPUs, internal cluster, or cloud provider). [Yes]
\end{enumerate}

\item If you are using existing assets (e.g., code, data, models) or curating/releasing new assets, check if you include:
\begin{enumerate}
\item Citations of the creator If your work uses existing assets. [Not Applicable]
\item The license information of the assets, if applicable. [Not Applicable]
\item New assets either in the supplemental material or as a URL, if applicable. [Not Applicable]
\item Information about consent from data providers/curators. [Not Applicable]
\item Discussion of sensible content if applicable, e.g., personally identifiable information or offensive content. [Not Applicable]
\end{enumerate}

\item If you used crowdsourcing or conducted research with human subjects, check if you include:
\begin{enumerate}
\item The full text of instructions given to participants and screenshots. [Not Applicable]
\item Descriptions of potential participant risks, with links to Institutional Review Board (IRB) approvals if applicable. [Not Applicable]
\item The estimated hourly wage paid to participants and the total amount spent on participant compensation. [Not Applicable]
\end{enumerate}

\end{enumerate}

\onecolumn

\begin{appendices}

\section{PROOF OF PROPOSITION~\ref{prop:proj}}
\label{app:proj}

\begin{proof}
For the proposition to hold, (1) the $\ell_1$ projection must not introduce any cycles and (2) $W^\star$ must be an element of the set $\mathbb{W}_s$. The first requirement is satisfied because the $\ell_1$ projection only modifies non-zero elements and therefore cannot add cycles. To show the second requirement is satisfied, we require Lemma~\ref{lemma:spectral}.

\begin{lemma}
\label{lemma:spectral}
Let $A=(\mathsf{a}_{jk})\in\mathbb{R}^{p\times p}$ and $B=(\mathsf{b}_{jk})\in\mathbb{R}^{p\times p}$ with $0\leq \mathsf{b}_{jk}\leq\mathsf{a}_{jk}$. Then it holds $\rho(B)\leq\rho(A)$, where $\rho(\cdot)$ is the spectral radius of its argument.
\end{lemma}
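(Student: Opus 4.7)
The plan is to use Gelfand's spectral radius formula $\rho(M)=\lim_{k\to\infty}\|M^k\|^{1/k}$ combined with the fact that the entrywise ordering $0\le B\le A$ is preserved under taking matrix powers (because both matrices are entrywise non-negative, which follows immediately from $a_{jk}\ge b_{jk}\ge 0$). This is a standard Perron–Frobenius-style monotonicity argument for non-negative matrices, and it does not require any extra assumptions about irreducibility or positivity.

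First I would prove by induction on $k$ that $0\le B^k\le A^k$ entrywise for every $k\ge 1$. The base case is the hypothesis. For the inductive step, writing the $(j,\ell)$ entry of $B^{k+1}$ as $\sum_{r} (B^k)_{jr} b_{r\ell}$, each summand is a product of two non-negative numbers dominated by the corresponding entry of $A^k$ times $a_{r\ell}$, so $(B^{k+1})_{j\ell}\le (A^{k+1})_{j\ell}$, and non-negativity is preserved.

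Next I would apply any matrix norm that is monotone on the cone of entrywise non-negative matrices. The Frobenius norm works: if $0\le M\le N$ entrywise then $\|M\|_F^2=\sum_{j,k} m_{jk}^2\le \sum_{j,k} n_{jk}^2=\|N\|_F^2$. Combined with the previous step this gives $\|B^k\|_F\le\|A^k\|_F$ for all $k$. Taking $k$th roots and letting $k\to\infty$, Gelfand's formula yields
\begin{equation*}
\rho(B)=\lim_{k\to\infty}\|B^k\|_F^{1/k}\le \lim_{k\to\infty}\|A^k\|_F^{1/k}=\rho(A),
\end{equation*}
which is the claim.

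There is essentially no serious obstacle here; the only thing to be careful about is that the inductive step and the monotonicity of the norm both genuinely need the entrywise non-negativity of $A$ and $B$ (without it, sign cancellations could make $|B^k|_{j\ell}$ exceed $|A^k|_{j\ell}$). Since that non-negativity is built into the hypothesis, the proof reduces to these two routine steps plus one invocation of Gelfand's formula.
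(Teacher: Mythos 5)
Your proof is correct, but it takes a genuinely different route from the paper. The paper invokes the Collatz--Wielandt max--min characterization of the spectral radius for non-negative matrices, $\rho(A)=\max_{x\geq0,x\neq0}\min_{j:\mathsf{x}_j\neq0}(Ax)_j/\mathsf{x}_j$, and simply observes that $B\leq A$ entrywise forces $(Bx)_j\leq(Ax)_j$ for non-negative $x$, so the inequality passes through the min and the max. You instead establish $0\leq B^k\leq A^k$ entrywise by induction, use monotonicity of the Frobenius norm on the non-negative cone to get $\|B^k\|_F\leq\|A^k\|_F$, and conclude via Gelfand's formula. Both arguments are sound and both genuinely need the entrywise non-negativity hypothesis (the paper's to make Collatz--Wielandt applicable and to preserve the inequality under $x\geq0$; yours to prevent sign cancellation in the powers and to make the norm monotone). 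What your approach buys is self-containedness: Gelfand's formula holds for arbitrary matrices and any matrix norm, so you avoid appealing to Perron--Frobenius machinery entirely, at the cost of a short induction. The paper's approach is a one-line deduction once the Collatz--Wielandt formula is cited, which is why it is the shorter write-up. Either proof would serve the role this lemma plays in Proposition~\ref{prop:proj}.
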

\begin{proof}
The Collatz-Wielandt formula for a non-negative matrix $A$ \parencite[see, e.g.,][\S 8.3]{Meyer2000} gives that
\begin{equation}
\label{eq:cw}
\rho(A)=\max_{x\in\mathbb{R}^p:x\geq0,x\neq0}\min_{j:\mathsf{x}_j\neq0}\frac{(Ax)_j}{\mathsf{x}_j}.
\end{equation}
Now, since $B$ is bounded elementwise by $A$, we have $(Bx)_j\leq(Ax)_j$ for any non-negative vector $x\neq0$, and hence
\begin{equation*}
\frac{(Bx)_j}{\mathsf{x}_j}\leq\frac{(Ax)_j}{\mathsf{x}_j},
\end{equation*}
for all $\mathsf{x}_j\neq0$. It follows immediately from the above inequality that
\begin{equation*}
\min_{j:\mathsf{x}_j\neq0}\frac{(Bx)_j}{\mathsf{x}_j}\leq\min_{j:\mathsf{x}_j\neq0}\frac{(Ax)_j}{\mathsf{x}_j}.
\end{equation*}
Taking the maximum over $x$ on both sides gives
\begin{equation*}
\max_{x\in\mathbb{R}^p:x>0,x\neq0}\min_{j:\mathsf{x}_j\neq0}\frac{(Bx)_j}{\mathsf{x}_j}\leq\max_{x\in\mathbb{R}^p:x>0,x\neq0}\min_{j:\mathsf{x}_j\neq0}\frac{(Ax)_j}{\mathsf{x}_j}.
\end{equation*}
By the Collatz-Wielandt formula \eqref{eq:cw}, the quantity on the left-hand side is $\rho(B)$ and the quantity on the right-hand side is $\rho(A)$. Hence, $\rho(B)\leq\rho(A)$.
\end{proof}

With Lemma~\ref{lemma:spectral} in hand, we take $A=\hat{W}\circ\hat{W}$ and $B=W^\star\circ W^\star$ to get $\rho(W^\star\circ W^\star)\leq\rho(\hat{W}\circ\hat{W})$. Moreover, since $\rho(\hat{W}\circ\hat{W})<s$, it holds $\rho(W^\star\circ W^\star)<s$ and hence $W^\star\in\mathbb{W}_s$.
\end{proof}

\section{PROOF OF THEOREM~\ref{thm:converge}}
\label{app:converge}

\subsection{Preliminary Lemmas}

The proof requires several technical lemmas which we state and prove in turn. The first lemma shows that $\nabla f_{\mu,s}(W;\tilde{W})$ is Lipschitz over a compact set of $W$. The second lemma provides that the gradient descent sequence $\{W^{(k)}\}_{k\in\mathbb{N}}$ stays in a compact set when the step size is suitably small. The third lemma ensures descent under Lipschitz continuity.

To facilitate exposition, we write $f_{\mu,s}(W,\tilde{W})$ as $f(W):=l(W)+h(W)$, where $l(W):=\mu/2\|W-\tilde{W}\|_F^2$ and $h(W)$ is defined as in \eqref{eq:hs}. The gradient $\nabla f(W)=\nabla l(W)+\nabla h(W)$, where $\nabla l(W)=-\mu/2(\tilde{W}-W)$ and $\nabla h(W)=2(sI-W\circ W)^{-\top}\circ W$. Except for the Frobenius norm, all norms below are induced matrix norms.

\begin{lemma}
\label{lemma:lipschitz}
Let $\tilde{W}\in\mathbb{R}^{p\times p}$ with $|\tilde{\mathsf{w}}_{jk}|\leq1$ and $\mu\geq0$. Define $\mathcal{W}:=\{W\in\mathbb{R}^{p\times p}:|\mathsf{w}_{jk}|\leq|\tilde{\mathsf{w}}_{jk}|\}$. Then, for any $s\geq1+\max(\|\tilde{W}\|_1,\|\tilde{W}\|_\infty)$, the gradient $\nabla f(W):\mathcal{W}\to\mathbb{R}^{p\times p}$ is Lipschitz continuous with respect to the Frobenius norm and has Lipschitz constant $c=\max(\mu/2,2\sqrt{p}+4p\|\tilde{W}\|_F)$.
\end{lemma}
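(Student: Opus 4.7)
The plan is to decompose $f(W) = l(W) + h(W)$ with $l(W) := (\mu/2)\|\tilde{W}-W\|_F^2$, and bound the Lipschitz constants of $\nabla l$ and $\nabla h$ separately on the compact set $\mathcal{W}$. Since $\nabla l$ is affine in $W$, its Lipschitz constant is read off from its constant Hessian and contributes the first entry of $c$. All the real work lies in $\nabla h(W) = 2\,M(W)^\top \circ W$, where $M(W) := (sI - W\circ W)^{-1}$.

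The first technical step is a uniform bound on $M(W)$ over $\mathcal{W}$. Because $|\mathsf{w}_{jk}| \leq |\tilde{\mathsf{w}}_{jk}| \leq 1$, each entry satisfies $\mathsf{w}_{jk}^2 \leq |\tilde{\mathsf{w}}_{jk}|$, so $\|W\circ W\|_1 \leq \|\tilde{W}\|_1$ and $\|W\circ W\|_\infty \leq \|\tilde{W}\|_\infty$. The hypothesis $s \geq 1 + \max(\|\tilde{W}\|_1, \|\tilde{W}\|_\infty)$ then makes the Neumann expansion $M(W) = s^{-1}\sum_{k\geq 0}(W\circ W/s)^k$ absolutely convergent in both $\|\cdot\|_1$ and $\|\cdot\|_\infty$, yielding $\|M(W)\|_1, \|M(W)\|_\infty \leq 1$ and thus $\|M(W)\|_F \leq \sqrt{p}$ via $\|\cdot\|_F \leq \sqrt{p}\|\cdot\|_2$ and $\|\cdot\|_2 \leq \sqrt{\|\cdot\|_1 \|\cdot\|_\infty}$.

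With that in hand, I would decompose
\[
\nabla h(W) - \nabla h(W') = 2\,M(W)^\top \circ (W - W') + 2\,\bigl(M(W) - M(W')\bigr)^\top \circ W',
\]
and bound each piece. The first summand is controlled by $2\sqrt{p}\,\|W-W'\|_F$ via the Hadamard inequality $\|A\circ B\|_F \leq \|A\|_F\|B\|_F$ combined with the uniform bound on $\|M(W)\|_F$. For the second I would invoke the resolvent identity
\[
M(W) - M(W') = -M(W)\bigl[(W+W')\circ(W-W')\bigr]M(W'),
\]
which uses the elementwise factorization $W\circ W - W'\circ W' = (W+W')\circ(W-W')$. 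Submultiplicativity $\|ABC\|_F \leq \|A\|_F\|B\|_F\|C\|_F$, the Hadamard inequality, and $\|W+W'\|_F \leq 2\|\tilde{W}\|_F$ then give $\|M(W)-M(W')\|_F \leq 2p\|\tilde{W}\|_F\,\|W-W'\|_F$, and one further Hadamard bound paired with $\|W'\|_{\max}\leq 1$ delivers the $4p\|\tilde{W}\|_F$ contribution. Combining the two terms with the bound on $\nabla l$ produces $c$.

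The main obstacle is the uniform resolvent bound $\|M(W)\|_F \leq \sqrt{p}$ on $\mathcal{W}$: it is precisely the coupling between the choice of $s$ and the $\|\cdot\|_1$/$\|\cdot\|_\infty$ mass of $\tilde{W}$ that keeps $(sI - W\circ W)^{-1}$ from blowing up anywhere in the feasible set, and without it the Hadamard-product chain in the third step does not close. Everything after that step is routine chaining of standard Hadamard-product and submultiplicative norm inequalities.
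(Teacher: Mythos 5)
Your proof is correct and follows the same overall architecture as the paper's: split $f=l+h$, handle $\nabla l$ trivially, and for $\nabla h$ use the add-and-subtract decomposition into a term controlled by a uniform bound on $\|(sI-W\circ W)^{-1}\|_F$ and a term controlled by the resolvent identity together with the elementwise factorization $W\circ W-W'\circ W'=(W+W')\circ(W-W')$. The one genuine difference is how you obtain the uniform resolvent bound $\|(sI-W\circ W)^{-1}\|_2\leq1$: the paper observes that $sI-W\circ W$ is row- and column-wise diagonally dominant with dominance factor at least one and invokes Varah's corollary, whereas you expand the Neumann series under $\|W\circ W\|_1\leq\|\tilde{W}\|_1\leq s-1$ (and likewise for $\|\cdot\|_\infty$) and interpolate via $\|M\|_2\leq\sqrt{\|M\|_1\|M\|_\infty}$. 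Your route is self-contained and makes transparent exactly why the hypothesis $s\geq1+\max(\|\tilde{W}\|_1,\|\tilde{W}\|_\infty)$ is the right one, at the cost of a slightly longer argument; both yield $\|M\|_F\leq\sqrt{p}$. In the second term your bookkeeping is swapped relative to the paper's---you extract $2\|\tilde{W}\|_F$ from $\|W+W'\|_F$ and a factor $1$ from $\|W'\|_{\max}$, while the paper extracts $2$ from $\|W\circ W-W'\circ W'\|_F\leq2\|W-W'\|_F$ and $\|\tilde{W}\|_F$ from $\|W'\|_F$---but the product is the same $4p\|\tilde{W}\|_F$. One small caveat you share with the paper: the Hessian of $l(W)=(\mu/2)\|\tilde{W}-W\|_F^2$ is $\mu I$, so reading the Lipschitz constant of $\nabla l$ off the Hessian gives $\mu$ rather than the stated $\mu/2$; this factor-of-two slip originates in the paper's own computation of $\nabla l$, so your constant matches the lemma as stated but not the literal Hessian argument you describe.
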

\begin{proof}
Recall that a function $g(W):\mathcal{W}\to\mathbb{R}^{p\times p}$ is Lipschitz with respect to the Frobenius norm if, for all $W_1$ and $W_2$ in $\mathcal{W}$, it holds
\begin{equation}
\label{eq:lipschitz}
\|g(W_1)-g(W_2)\|_F\leq c\|W_1-W_2\|_F,
\end{equation}
for some $c\geq0$.

Now, if $\nabla l(W)$ and $\nabla h(W)$ are Lipschitz with constants $c_1$ and $c_2$, then $\nabla f(W)$ is also Lipschitz with constant $c=\max(c_1,c_2)$. Fix any $W_1\in\mathcal{W}$ and any $W_2\in\mathcal{W}$.

It is trivial to show that $\nabla l(W)$ satisfies the Lipschitz inequality \eqref{eq:lipschitz}, since
\begin{equation*}
\begin{split}
\|\nabla l(W_1)-\nabla l(W_2)\|_F&=\left\|-\frac{\mu}{2}(\tilde{W}-W_1)+\frac{\mu}{2}(\tilde{W}-W_2)\right\|_F \\
&=\frac{\mu}{2}\|W_1-W_2\|_F.
\end{split}
\end{equation*}
Hence, $\nabla l(W)$ is Lipschitz with constant $c_1=\mu/2$.

Next, for $\nabla h(W)$, we have
\begin{equation*}
\begin{split}
\|\nabla h(W_1)-\nabla h(W_2)\|_F&=\|2(sI-W_1\circ W_1)^{-\top}\circ W_1-2(sI-W_2\circ W_2)^{-\top}\circ W_2\|_F \\
&=2\|(sI-W_1\circ W_1)^{-\top}\circ W_1-(sI-W_2\circ W_2)^{-\top}\circ W_2\|_F.
\end{split}
\end{equation*}
To simplify the notation, we let $A_i=(sI-W_i\circ W_i)^\top$ for $i=1,2$. Observe now that
\begin{equation}
\label{eq:h1}
\begin{split}
2\|A_1^{-1}\circ W_1-A_2^{-1}\circ W_2\|_F&=2\|A_1^{-1}\circ W_1-A_1^{-1}\circ W_2+A_1^{-1}\circ W_2-A_2^{-1}\circ W_2\|_F \\
&\leq2\|A_1^{-1}\circ W_1-A_1^{-1}\circ W_2\|_F+2\|A_1^{-1}\circ W_2-A_2^{-1}\circ W_2\|_F \\
&\leq2\|A_1^{-1}\|_F\|W_1-W_2\|_F+2\|W_2\|_F\|A_1^{-1}-A_2^{-1}\|_F.
\end{split}
\end{equation}
The first and second inequalities follow from the Frobenius norm being sub-additive and sub-multiplicative, respectively.

Consider the first term on the far right-hand side of \eqref{eq:h1}. The requirements that $s\geq1+\max(\|\tilde{W}\|_1,\|\tilde{W}\|_\infty)$ and $W_1\in\mathcal{W}$ mean that $A_1$ is row-wise and column-wise diagonally dominant with a dominance factor of at least one. Then Corollary 2 of \textcite{Varah1975} applies and gives $\|A_1^{-1}\|_2\leq1$. This result in combination with the inequality $\|A_1^{-1}\|_F\leq\sqrt{p}\|A_1^{-1}\|_2$ yields
\begin{equation}
\label{eq:h2}
\|A_1^{-1}\|_F\|W_1-W_2\|_F\leq\sqrt{p}\|W_1-W_2\|_F.
\end{equation}

Consider now the second term on the far right-hand side of \eqref{eq:h1}. It holds
\begin{equation*}
\begin{split}
\|W_2\|_F\|A_1^{-1}-A_2^{-1}\|_F&=\|W_2\|_F\|A_1^{-1}(A_1-A_2)A_2^{-1}\|_F \\
&\leq\|W_2\|_F\|A_1^{-1}\|_F\|A_2^{-1}\|_F\|A_1-A_2\|_F \\
&\leq p\|W_2\|_F\|A_1-A_2\|_F \\
&=p\|W_2\|_F\|W_1\circ W_1-W_2\circ W_2\|_F \\
&\leq 2p\|W_2\|_F\|W_1-W_2\|_F.
\end{split}
\end{equation*}
The last inequality follows from the fact that $W_1$ and $W_2$ have elements bounded in absolute value by one, since
\begin{equation*}
\begin{split}
\|W_1\circ W_1-W_2\circ W_2\|_F&=\sqrt{\sum_{j=1}^p\sum_{k=1}^p(\mathsf{w}_{1jk}^2-\mathsf{w}_{2jk}^2)^2} \\
&=\sqrt{\sum_{j=1}^p\sum_{k=1}^p(\mathsf{w}_{1jk}+\mathsf{w}_{2jk})^2(\mathsf{w}_{1jk}-\mathsf{w}_{2jk})^2} \\
&\leq\sqrt{\sum_{j=1}^p\sum_{k=1}^p4(\mathsf{w}_{1jk}-\mathsf{w}_{2jk})^2} \\
&=2\|W_1-W_2\|_F.
\end{split}
\end{equation*}
Now, using that $\|W\|_F\leq\|\tilde{W}\|_F$ for any $W\in\mathcal{W}$, it follows
\begin{equation}
\label{eq:h3}
\|W_2\|_F\|A_1^{-1}-A_2^{-1}\|_F\leq2p\|\tilde{W}\|_F\|W_1-W_2\|_F.
\end{equation}

Plugging the bounds \eqref{eq:h2} and \eqref{eq:h3} into \eqref{eq:h1}, we arrive at
\begin{equation*}
\|\nabla h(W_1)-\nabla h(W_2)\|_F\leq(2\sqrt{p}+4p\|\tilde{W}\|_F)\|W_1-W_2\|_F.
\end{equation*}
Hence, $\nabla h(W)$ is Lipschitz with constant $c_2=2\sqrt{p}+4p\|\tilde{W}\|_F$.

The claim of the lemma now follows since $\nabla f(W)$ must also be Lipschitz with constant $c=\max(c_1,c_2)=\max(\mu/2,2\sqrt{p}+4p\|\tilde{W}\|_F)$.
\end{proof}

\begin{lemma}
\label{lemma:bounded}
Let $\tilde{W}\in\mathbb{R}^{p\times p}$ with $|\tilde{\mathsf{w}}_{jk}|\leq1$, $\mu\geq0$, and $c\geq\max(\mu/2,1)$. Define $\mathcal{W}:=\{W\in\mathbb{R}^{p\times p}:|\mathsf{w}_{jk}|\leq|\tilde{\mathsf{w}}_{jk}|\}$. Then, for all $W^{(k)}\in\mathcal{W}$ and any $s\geq1+\max(\|\tilde{W}\|_1,\|\tilde{W}\|_\infty)$, it holds $W^{(k+1)}\in\mathcal{W}$, where
\begin{equation*}
W^{(k+1)}=W^{(k)}-\frac{1}{c}\nabla f(W^{(k)}).
\end{equation*}
\end{lemma}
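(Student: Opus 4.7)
The plan is to reduce the claim to a scalar analysis in each coordinate and verify that $\mathcal{W}$ is invariant under one gradient descent step.

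First, I would compute the update componentwise. Using $\nabla l(W) \propto W - \tilde W$ together with $\nabla h(W) = 2(sI - W \circ W)^{-\top} \circ W$, the $(j,k)$-entry of $W^{(k+1)} = W^{(k)} - c^{-1}\nabla f(W^{(k)})$ simplifies to
\begin{equation*}
\mathsf{w}_{jk}^{(k+1)} = \Big(1 - \tfrac{\mu}{2c} - \tfrac{2 M_{jk}}{c}\Big)\,\mathsf{w}_{jk}^{(k)} + \tfrac{\mu}{2c}\,\tilde{\mathsf{w}}_{jk},
\end{equation*}
where $M_{jk}$ abbreviates the $(j,k)$-entry of $(sI - W^{(k)} \circ W^{(k)})^{-\top}$. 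The lemma then reduces to showing this scalar affine recursion preserves $|\mathsf{w}_{jk}^{(k+1)}| \leq |\tilde{\mathsf{w}}_{jk}|$ in each coordinate independently.

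Second, I would establish the two-sided entrywise bound $0 \leq M_{jk} \leq 1$. Since $|\tilde{\mathsf{w}}_{jk}| \leq 1$ implies $\tilde{\mathsf{w}}_{jk}^{\,2} \leq |\tilde{\mathsf{w}}_{jk}|$ and $W^{(k)} \in \mathcal{W}$ inherits this bound, each row sum of $W^{(k)} \circ W^{(k)}$ is at most $\|\tilde W\|_\infty$ and each column sum at most $\|\tilde W\|_1$. Combined with the nonpositive off-diagonals of $-W^{(k)} \circ W^{(k)}$ and the hypothesis on $s$, this makes $sI - W^{(k)} \circ W^{(k)}$ a strictly diagonally dominant nonsingular M-matrix with slack at least one in each row and column. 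M-matrix theory then gives entrywise nonnegativity of the inverse (so $M_{jk} \geq 0$), while the Varah bound invoked in Lemma~\ref{lemma:lipschitz} yields $M_{jk} \leq 1$.

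Third, set $a := \mu/(2c) \in [0,1]$ (from $c \geq \mu/2$) and $b := 2 M_{jk}/c \in [0,2]$ (from $c \geq 1$), and WLOG assume $\tilde{\mathsf{w}}_{jk} \geq 0$, so $\mathsf{w}_{jk}^{(k)} \in [-\tilde{\mathsf{w}}_{jk}, \tilde{\mathsf{w}}_{jk}]$. In the benign regime $a + b \leq 1$, the update admits the convex-combination form
\begin{equation*}
\mathsf{w}_{jk}^{(k+1)} = (1 - a - b)\,\mathsf{w}_{jk}^{(k)} + a\,\tilde{\mathsf{w}}_{jk} + b \cdot 0,
\end{equation*}
with nonnegative coefficients summing to one applied to three points of $[-\tilde{\mathsf{w}}_{jk}, \tilde{\mathsf{w}}_{jk}]$, so the bound is immediate.

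The hard part is the complementary regime $a + b > 1$, where the coefficient on $\mathsf{w}_{jk}^{(k)}$ flips sign and the convex-combination trick breaks. There I would compute the extrema of the affine map over $\mathsf{w}_{jk}^{(k)} \in [-\tilde{\mathsf{w}}_{jk}, \tilde{\mathsf{w}}_{jk}]$ directly, yielding an interval with endpoints $(1-b)\tilde{\mathsf{w}}_{jk}$ and $(2a + b - 1)\tilde{\mathsf{w}}_{jk}$. The lower endpoint is controlled by $b \leq 2$, while the upper endpoint requires the joint bound $2a + b \leq 2$, equivalently $c \geq \mu/2 + M_{jk}$; this is the delicate step where one must simultaneously invoke $c \geq \mu/2$, $c \geq 1$, and the sharp estimate $M_{jk} \leq 1$ coming from the M-matrix structure. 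Closing this case completes the invariance argument.
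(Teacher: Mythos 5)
Your reduction to a per-entry affine recursion and the two-sided bound $0\le M_{jk}\le 1$ (via diagonal dominance, the M-matrix property, and the Varah estimate) are exactly the ingredients the paper uses, and your benign regime $a+b\le1$ is fine. The gap is the step you yourself flag as delicate: closing the regime $a+b>1$ at the corner $\mathsf{w}_{jk}^{(k)}=-\tilde{\mathsf{w}}_{jk}$ requires $2a+b\le2$, i.e.\ $c\ge\mu/2+M_{jk}$, and this does \emph{not} follow from $c\ge\max(\mu/2,1)$ together with $M_{jk}\le1$: with $\mu=2$, $c=1$, and $M_{jk}$ near $1$ the required inequality reads $2\le1$. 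The failure is not an artifact of loose bookkeeping --- invariance of the full two-sided box $\{|\mathsf{w}_{jk}|\le|\tilde{\mathsf{w}}_{jk}|\}$ is genuinely false under the stated hypotheses as you have interpreted them. For instance, $\tilde W=I$, $W^{(k)}=-I$, $s=2$, $\mu=2$, $c=1$ gives $sI-W^{(k)}\circ W^{(k)}=I$, hence $\nabla f(W^{(k)})=-(2+\mu)I=-4I$ and $W^{(k+1)}=3I\notin\mathcal{W}$ (in your notation $a=1$, $b=2$, upper endpoint $(2a+b-1)\tilde{\mathsf{w}}_{jj}=3$). So no sharper estimate will close this case.

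The paper avoids the bad corner entirely. It first normalizes signs (assuming WLOG $\tilde W\ge0$, which is legitimate because $\nabla f$ is equivariant under simultaneous coordinatewise sign flips of $W$ and $\tilde W$) and then proves invariance only of the sign-aligned box $\{0\le\mathsf{w}_{jk}\le\tilde{\mathsf{w}}_{jk}\}$: the upper bound uses $\nabla h\ge0$ (nonnegativity of the M-matrix inverse) plus $\mu/(2c)\le1$, and the lower bound uses the entrywise bound $M_{jk}\le1$. On that box $\mathsf{w}_{jk}^{(k)}$ and $\tilde{\mathsf{w}}_{jk}$ never have opposite signs, so the endpoint $(2a+b-1)\tilde{\mathsf{w}}_{jk}$ of your interval never occurs, and the sign-aligned box is all that Theorem~\ref{thm:converge} needs because the iteration starts at $W^{(0)}=0$, which lies in it. To repair your argument, either restrict the invariant set to the sign-aligned box in the same way, or strengthen the hypothesis on the step size to something like $c\ge\mu/2+1$ so that $2a+b\le2$ holds unconditionally.
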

\begin{proof}
Assume without loss of generality that all elements of $\tilde{W}$ are non-negative. We aim to show that for any $W^{(k)}\in\mathcal{W}=\{W\in\mathbb{R}^{p\times p}:0\leq \mathsf{w}_{jk}\leq\tilde{\mathsf{w}}_{jk}\}$, the gradient descent update $W^{(k+1)}$ is also in $\mathcal{W}$. We break the proof into two parts: (1) we prove that $\mathsf{w}_{jk}^{(k+1)}\leq\tilde{\mathsf{w}}_{jk}$ and (2) we prove that $\mathsf{w}_{jk}^{(k+1)}\geq0$.

Fix any $\mathcal{W}^{(k)}\in\mathcal{W}$. For the upper bound (1), we have
\begin{equation}
\label{eq:upper}
\begin{split}
W^{(k+1)}&=W^{(k)}-\frac{1}{c}\nabla f(W^{(k)}) \\
&=W^{(k)}-\frac{1}{c}\left(\nabla l(W^{(k)})+\nabla h(W^{(k)})\right) \\
&\leq W^{(k)}-\frac{1}{c}\nabla l(W^{(k)}) \\
&=W^{(k)}+\frac{\mu}{2c}(\tilde{W}-W^{(k)}) \\
&\leq W^{(k)}+\tilde{W}-W^{(k)} \\
&=\tilde{W}.
\end{split}
\end{equation}
The first inequality follows from $\nabla h(W^{(k)})\geq0$, since $sI-W^{(k)}\circ W^{(k)}$ is a non-singular $M$ matrix and hence its inverse is non-negative. The last inequality follows from $\mu/(2c)\leq1$ and the elements of $\tilde{W}-W^{(k)}$ being non-negative, since $W^{(k)}\in\mathcal{W}$.

For the lower bound (2), it holds
\begin{equation*}
\begin{split}
W^{(k+1)}&=W^{(k)}-\frac{1}{c}\nabla f(W^{(k)}) \\
&=W^{(k)}-\frac{1}{c}\left(\nabla l(W^{(k)})+\nabla h(W^{(k)})\right) \\
&=W^{(k)}-\frac{1}{c}\left(\frac{\mu}{2}(W^{(k)}-\tilde{W})+\nabla h(W^{(k)})\right) \\
&=W^{(k)}+\frac{\mu}{2c}(\tilde{W}-W^{(k)})-\frac{1}{c}\nabla h(W^{(k)}) \\
&\geq W^{(k)}-\frac{1}{c}\nabla h(W^{(k)}) \\
&\geq W^{(k)}-\nabla h(W^{(k)}) \\
&=W^{(k)}-(sI-W^{(k)}\circ W^{(k)})^{-\top}\circ W^{(k)}.
\end{split}
\end{equation*}
The first inequality follows again from $\tilde{W}-W^{(k)}\geq0$. The second inequality follows from $1/c\leq 1$ and $\nabla h(W^{(k)})$ having non-negative elements. Now, observe that for a matrix $B=(\mathsf{b}_{jk})\in\mathbb{R}^{p\times p}$, it holds
\begin{equation*}
|\mathsf{b}_{jk}|=|e^{(j)\top}Be^{(k)}|\leq\|e^{(j)}\|_2\|Be^{(k)}\|_2=\|Be^{(k)}\|_2\leq\max_{x\in\mathbb{R}^p:\|x\|_2=1}\|Bx\|_2=\|B\|_2,
\end{equation*}
where $e^{(j)}$ is a standard basis vector (one in the $j$th position and zero elsewhere). The last equality follows from the definition of the spectral norm. Letting $B=(sI-W^{(k)}\circ W^{(k)})^{-\top}$, we have $|\mathsf{b}_{jk}|\leq\|B\|_2\leq1$ along the same lines as the argument for \eqref{eq:h2} in Lemma~\ref{lemma:lipschitz}. Combining this result with the previous inequality on $W^{(k+1)}$ gives
\begin{equation}
\label{eq:lower}
\begin{split}
W^{(k+1)}&\geq W^{(k)}-(sI-W^{(k)}\circ W^{(k)})^{-\top}\circ W^{(k)} \\
&\geq0.
\end{split}
\end{equation}

Combining the inequalities \eqref{eq:upper} and \eqref{eq:lower} gives the desired bound $0\leq W^{(k+1)}\leq\tilde{W}$.
\end{proof}

\begin{lemma}
\label{lemma:descent}
Let $g(W):\mathbb{R}^{p\times p}\to\mathbb{R}$. Suppose $\nabla g(W)$ is Lipschitz in Frobenius norm with Lipschitz constant $c>0$. Then, for any $W\in\mathbb{R}^{p\times p}$ and any $V\in\mathbb{R}^{p\times p}$, it holds
\begin{equation*}
g(V)\leq g(W)+\langle\nabla g(W),V-W\rangle+\frac{c}{2}\|V-W\|_F^2,
\end{equation*}
where $\langle\cdot,\cdot\rangle$ is the Frobenius inner product.
\end{lemma}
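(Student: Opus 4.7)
The plan is to invoke the standard descent lemma argument, applied to the Frobenius inner product rather than the usual Euclidean one. The key observation is that the one-dimensional restriction $\phi(t) := g(W + t(V - W))$ of $g$ to the line segment between $W$ and $V$ is continuously differentiable, with $\phi'(t) = \langle \nabla g(W + t(V - W)), V - W \rangle$, so the fundamental theorem of calculus gives
\begin{equation*}
g(V) - g(W) = \int_0^1 \langle \nabla g(W + t(V - W)), V - W \rangle \, dt.
\end{equation*}

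Next I would add and subtract the ``anchor'' term $\langle \nabla g(W), V - W \rangle$ inside the integrand, yielding
\begin{equation*}
g(V) = g(W) + \langle \nabla g(W), V - W \rangle + \int_0^1 \langle \nabla g(W + t(V - W)) - \nabla g(W), V - W \rangle \, dt.
\end{equation*}
The whole point is now to bound the remaining integral by $\tfrac{c}{2}\|V - W\|_F^2$.

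I would bound this integral by a two-step estimate. First apply the Cauchy--Schwarz inequality for the Frobenius inner product inside the integral to get
\begin{equation*}
\left| \int_0^1 \langle \nabla g(W + t(V - W)) - \nabla g(W), V - W \rangle \, dt \right| \leq \int_0^1 \| \nabla g(W + t(V - W)) - \nabla g(W) \|_F \, \|V - W\|_F \, dt.
\end{equation*}
Then apply the Lipschitz hypothesis on $\nabla g$ to the first factor, noting that the two arguments differ by $t(V - W)$, so $\|\nabla g(W + t(V-W)) - \nabla g(W)\|_F \leq c t \|V - W\|_F$. Pulling out $\|V-W\|_F$ and integrating $\int_0^1 t \, dt = 1/2$ yields the bound $\tfrac{c}{2}\|V - W\|_F^2$.

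This argument is essentially textbook and I do not expect any real obstacle; the only thing to be slightly careful about is that the Cauchy--Schwarz and Lipschitz steps must both be phrased in the Frobenius norm, which is valid since the Frobenius inner product and Frobenius norm pair like a Euclidean inner product on $\mathbb{R}^{p^2}$. Substituting this bound back gives the stated inequality.
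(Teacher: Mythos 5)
Your argument is correct and complete: the line-segment restriction, the fundamental theorem of calculus, the add-and-subtract of the anchor term $\langle\nabla g(W),V-W\rangle$, Cauchy--Schwarz in the Frobenius inner product, the Lipschitz bound $ct\|V-W\|_F$, and the integral $\int_0^1 t\,dt=1/2$ all fit together exactly as you describe, and your remark that the Frobenius pairing behaves like the Euclidean inner product on $\mathbb{R}^{p^2}$ is the right justification. The paper itself does not prove this lemma at all --- it simply cites Lemma~10 of Minsker (2022) --- so your write-up is a self-contained version of the standard descent-lemma argument that the cited reference contains; nothing is missing, and the only implicit assumption you rely on (that $g$ is differentiable along the segment so that $\phi'(t)=\langle\nabla g(W+t(V-W)),V-W\rangle$) is already built into the hypothesis that $\nabla g$ exists and is Lipschitz on all of $\mathbb{R}^{p\times p}$.
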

\begin{proof}
See the proof of Lemma~10 in \textcite{Minsker2022}.
\end{proof}

\subsection{Proof of Main Result}

We now prove Theorem~\ref{thm:converge}.

\begin{proof}
Take $\bar{c}\geq c$ as an upper bound to the Lipschitz constant $c$. The objective $f(W)$ can be lower bounded as
\begin{equation*}
\begin{split}
f(W)&=f(W)+\langle\nabla f(W),W-W\rangle+\frac{\bar{c}}{2}\|W-W\|_F^2 \\
&\geq\inf_{V\in\mathbb{R}^{p\times p}}\left(f(W)+\langle\nabla f(W),V-W\rangle+\frac{\bar{c}}{2}\|V-W\|_F^2\right) \\
&=\inf_{V\in\mathbb{R}^{p\times p}}\left(f(W)-\frac{1}{2\bar{c}}\|\nabla f(W)\|_F^2+\frac{\bar{c}}{2}\left\|V-\left(W-\frac{1}{\bar{c}}\nabla f(W)\right)\right\|_F^2\right).
\end{split}
\end{equation*}
Observe that the infimum is attained at the gradient descent update
\begin{equation*}
\hat{W}=W-\frac{1}{\bar{c}}\nabla f(W).
\end{equation*}
Substituting $\hat{W}$ into the previous inequality gives
\begin{equation}
\label{eq:descent1}
\begin{split}
f(W)&\geq f(W)-\frac{1}{2\bar{c}}\|\nabla f(W)\|_F^2+\frac{\bar{c}}{2}\left\|\hat{W}-\left(W-\frac{1}{\bar{c}}\nabla f(W)\right)\right\|_F^2 \\
&=f(W)+\langle\nabla f(W),\hat{W}-W\rangle+\frac{\bar{c}}{2}\|\hat{W}-W\|_F^2 \\
&=f(W)+\langle\nabla f(W),\hat{W}-W\rangle+\frac{c}{2}\|\hat{W}-W\|_F^2+\frac{\bar{c}-c}{2}\|\hat{W}-W\|_F^2.
\end{split}
\end{equation}
Lemmas \ref{lemma:lipschitz} and \ref{lemma:bounded} give that $\nabla f(W)$ is Lipschitz under the conditions of the theorem. This result is sufficient to invoke Lemma~\ref{lemma:descent} and lower bound the first three terms on the right-hand side as
\begin{equation}
\label{eq:descent2}
f(W)+\langle\nabla f(W),\hat{W}-W\rangle+\frac{c}{2}\|\hat{W}-W\|_F^2\geq f(\hat{W}).
\end{equation}
Substituting \eqref{eq:descent2} into \eqref{eq:descent1} yields
\begin{equation*}
f(W)\geq f(\hat{W})+\frac{\bar{c}-c}{2}\|\hat{W}-W\|_F^2.
\end{equation*}
Finally, taking $W=W^{(k)}$ and $\hat{W}=W^{(k+1)}$, we arrive at
\begin{equation}
\label{eq:converge}
f(W^{(k)})-f(W^{(k+1)})\geq\frac{\bar{c}-c}{2}\|W^{(k+1)}-W^{(k)}\|_F^2.
\end{equation}
Hence, the sequence $\{f(W^{(k)})\}_{k\in\mathbb{N}}$ is decreasing, and because $f(W)$ is bounded below by zero, it converges.
\end{proof}

\section{\texorpdfstring{$\ell_1$}{l1} PROJECTION ALGORITHM}
\label{app:l1}

Algorithm~\ref{alg:l1} provides the method for projecting onto the $\ell_1$ ball by computing the thresholding parameter $\kappa$.

\begin{algorithm}[H]
\small
\caption{$\ell_1$ projection}
\label{alg:l1}
\SetKwInOut{Input}{input}
\SetKwInOut{Output}{output}
\Input{Adjacency matrices $\hat{W}_1,\ldots,\hat{W}_n\in\mathbb{R}^{p\times p}$, regularization parameter $\lambda>0$}
Set $v\gets[\operatorname{vec}(\hat{W}_1),\ldots,\operatorname{vec}(\hat{W}_n)]$ \\
Take $u$ as the absolute values of $v$ \\
Sort $u$ in decreasing order as $\mathsf{u}_j>\mathsf{u}_k$ for all $j<k$ \\
Set $k_\mathrm{max}\gets\max\left\{k:\mathsf{u}_k>\left(\sum_{j=1}^k\mathsf{u}_j-n\lambda\right)/k\right\}$ \\
Set $\kappa\gets\left(\sum_{j=1}^{k_\mathrm{max}}\mathsf{u}_j-n\lambda\right)/k_\mathrm{max}$ \\
Compute $W_1^\star,\ldots,W_n^\star$ as $\mathsf{w}_{ijk}^\star\gets\operatorname{sgn}(\hat{\textsf{w}}_{ijk})\max(|\hat{\textsf{w}}_{ijk}|-\kappa,0)$ for $i=1,\ldots,n$ and $j,k=1,\ldots,p$ \\
\Output{Adjacency matrices $W_1^\star,\ldots,W_n^\star$}
\end{algorithm}

The algorithm directly extends that of \textcite{Duchi2008} for projecting a vector $\hat{v}\in\mathbb{R}^p$ onto the $\ell_1$ ball:
\begin{equation}
\label{eq:vectorproj}
\underset{v\in\mathbb{R}^p:\|v\|_1\leq\gamma}{\min}\;\frac{1}{2}\|\hat{v}-v\|_2^2.
\end{equation}
Our modified algorithm simply flattens the matrices into a vector and then reshapes the result of the projection. This approach is valid since \eqref{eq:vectorproj} is equivalent to 
\begin{equation*}
\underset{W_1,\ldots,W_n\in\mathbb{R}^{p\times p}:\frac{1}{n}\sum_{i=1}^n\|W_i\|_{\ell_1}\leq\lambda}{\min}\;\frac{1}{2}\sum_{i=1}^n\|\hat{W}_i-W\|_F^2
\end{equation*}
when $\hat{v}=[\operatorname{vec}(\hat{W}_1),\ldots,\operatorname{vec}(\hat{W}_n)]$, $v=[\operatorname{vec}(W_1),\ldots,\operatorname{vec}(W_n)]$, and $\gamma=n\lambda$.

\section{PROOF OF THEOREM~\ref{thm:gradient}}
\label{app:gradient}

\begin{proof}
To simplify exposition of the proof, we assume $n=1$ without loss of generality. We consider the cases where the $\ell_1$ constraint is binding and non-binding in turn.

\textbf{Non-Binding $\ell_1$ Constraint}

In the non-binding case where $\|W^\star\|_{\ell_1}<\lambda$, a solution $W^\star$ sets some elements of $\tilde{W}$ to zero via the DAG constraint and leaves the remaining elements untouched. There is no shrinkage from the $\ell_1$ constraint.

The $\mathsf{w}_{jk}^\star$ that are non-zero are an identity function of the input $\tilde{\mathsf{w}}_{qr}$ when $(q,r)=(j,k)$ and are a null function otherwise. More precisely, $\mathsf{w}_{jk}^\star(\tilde{\mathsf{w}}_{qr})=\tilde{\mathsf{w}}_{qr}$ for $(q,r)=(j,k)$ in the active set $\mathcal{A}$. The elements $\tilde{\mathsf{w}}_{qr}$ for which $(q,r)\neq(j,k)$ have no effect on $\mathsf{w}_{jk}^\star$, so $\mathsf{w}_{jk}^\star(\tilde{\mathsf{w}}_{qr})=0$ for $(q,r)\neq(j,k)$. It follows
\begin{equation}
\label{eq:activenonbinding}
\frac{\partial\mathsf{w}_{jk}^\star}{\partial\tilde{\mathsf{w}}_{qr}}=\delta_{jk}^{qr},
\end{equation}
for all $(j,k)\in\mathcal{A}$, where $\delta_{jk}^{qr}$ equals one if $(j,k)=(q,r)$ and zero otherwise.

The elements $(j,k)$ in the inactive set $\mathcal{A}^c$, where $\mathcal{A}^c$ is the complement of $\mathcal{A}$, are a null function in $\tilde{\mathsf{w}}_{qr}$, i.e., $\mathsf{w}_{jk}^\star(\tilde{\mathsf{w}}_{qr})=0$, even if $(q,r)=(j,k)$. It follows
\begin{equation}
\label{eq:inactivenonbinding}
\frac{\partial\mathsf{w}_{jk}^\star}{\partial\tilde{\mathsf{w}}_{qr}}=0,
\end{equation}
for all $(j,k)\in\mathcal{A}^c$.

Combining \eqref{eq:activenonbinding} and \eqref{eq:inactivenonbinding} yields the gradient for the non-binding case.

\textbf{Binding $\ell_1$ Constraint}

For the binding case where $\|W^\star\|_{\ell_1}=\lambda$, we derive the gradients by differentiating through the Karush-Kuhn-Tucker (KKT) conditions of \eqref{eq:project}, which characterize a solution in terms of $\tilde{W}$. We need not consider the DAG constraint in our analysis of these conditions since it only determines the active set and does not have any effect on the magnitude of the edge weights, unlike the $\ell_1$ constraint.

Let $\nu$ be the dual variable corresponding to the constraint $\|W\|_{\ell_1}\leq\lambda$. Like the primal solution $W^\star$, the dual solution $\nu^\star$ can be treated a function of $\tilde{W}$, i.e., $\nu^\star=\nu^\star(\tilde{\mathsf{w}}_{qr})$. To simplify the presentation, we omit the dependence hereafter and write $\nu^\star$ and the same for $W^\star$ (or $\mathsf{w}_{jk}^\star$).

Recall that the $\ell_1$-norm is not differentiable but subdifferentiable. We denote the subderivative of a function $f(x)$ as $\partial f(x)$. With this notation, the KKT conditions for stationarity and complementary slackness are
\begin{equation}
\label{eq:stationarity}
\partial\left(\frac{1}{2}\|\tilde{W}-W^\star\|_F^2+\nu^\star(\|W^\star\|_{\ell_1}-\lambda)\right)\ni0
\end{equation}
and
\begin{equation}
\label{eq:slackness}
\nu^\star(\|W^\star\|_{\ell_1}-\lambda)=0.
\end{equation}

We consider separately the gradients on the active set $\mathcal{A}$ and the gradients on the inactive set $\mathcal{A}^c$. We begin by deriving the gradients on $\mathcal{A}$. It follows immediately from the complementary slackness condition \eqref{eq:slackness} that
\begin{equation*}
\|W^\star\|_{\ell_1}-\lambda=\sum_{(j,k)\in\mathcal{A}}|\mathsf{w}_{jk}^\star|-\lambda=0,
\end{equation*}
and hence its gradient with respect to $\tilde{\mathsf{w}}_{qr}$ is
\begin{equation*}
\frac{\partial}{\partial\tilde{\mathsf{w}}_{qr}}\left(\sum_{(j,k)\in\mathcal{A}}|\mathsf{w}_{jk}^\star|-\lambda\right)=0.
\end{equation*}
Evaluating the derivative on the left-hand side yields
\begin{equation}
\label{eq:dnu1}
\sum_{(j,k)\in\mathcal{A}}\operatorname{sgn}(\mathsf{w}_{jk}^\star)\frac{\partial\mathsf{w}_{jk}^\star}{\partial\tilde{\mathsf{w}}_{qr}}=0.
\end{equation}
Now, evaluating the subderivative on the left-hand side of the stationarity condition \eqref{eq:stationarity} leads to the equalities
\begin{equation*}
\mathsf{w}_{jk}^\star-\tilde{\mathsf{w}}_{jk}+\nu^\star\operatorname{sgn}(\mathsf{w}_{jk}^\star)=0,
\end{equation*}
which hold for all $(j,k)\in\mathcal{A}$. Differentiating this expression with respect to $\tilde{\mathsf{w}}_{qr}$ gives
\begin{equation*}
\frac{\partial\mathsf{w}_{jk}^\star}{\partial\tilde{\mathsf{w}}_{qr}}-\delta_{jk}^{qr}+\frac{\partial\nu^\star}{\partial\tilde{\mathsf{w}}_{qr}}\operatorname{sgn}(\mathsf{w}_{jk}^\star)+\nu^\star\frac{\partial}{\partial\tilde{\mathsf{w}}_{qr}}\operatorname{sgn}(\mathsf{w}_{jk}^\star)=0,
\end{equation*}
which simplifies to
\begin{equation}
\label{eq:dbeta1}
\frac{\partial\mathsf{w}_{jk}^\star}{\partial\tilde{\mathsf{w}}_{qr}}=\delta_{jk}^{qr}-\frac{\partial\nu^\star}{\partial\tilde{\mathsf{w}}_{qr}}\operatorname{sgn}(\mathsf{w}_{jk}^\star).
\end{equation}
Substituting \eqref{eq:dbeta1} into \eqref{eq:dnu1} leads to
\begin{equation*}
\sum_{(j,k)\in\mathcal{A}}\operatorname{sgn}(\mathsf{w}_{jk}^\star)\left(\delta_{jk}^{qr}-\frac{\partial\nu^\star}{\partial\tilde{\mathsf{w}}_{qr}}\operatorname{sgn}(\mathsf{w}_{jk}^\star)\right)=0,
\end{equation*}
from which it follows
\begin{equation*}
\operatorname{sgn}(\mathsf{w}_{qr}^\star)-\frac{\partial\nu^\star}{\partial\tilde{\mathsf{w}}_{qr}}\sum_{(j,k)\in\mathcal{A}}\operatorname{sgn}(\mathsf{w}_{jk}^\star)^2=0.
\end{equation*}
Again, rearranging and simplifying, we have
\begin{equation}
\label{eq:dnu2}
\frac{\partial\nu^\star}{\partial\tilde{\mathsf{w}}_{qr}}=\frac{\operatorname{sgn}(\mathsf{w}_{qr}^\star)}{\sum_{(j,k)\in\mathcal{A}}\operatorname{sgn}(\mathsf{w}_{jk}^\star)^2} \\=\frac{1}{|\mathcal{A}|}\operatorname{sgn}(\mathsf{w}_{qr}^\star).
\end{equation}
Substituting \eqref{eq:dnu2} into \eqref{eq:dbeta1}, we obtain
\begin{equation}
\label{eq:activebinding}
\frac{\partial\mathsf{w}_{jk}^\star}{\partial\tilde{\mathsf{w}}_{qr}}=\delta_{jk}^{qr}-\frac{1}{|\mathcal{A}|}\operatorname{sgn}(\mathsf{w}_{qr}^\star)\operatorname{sgn}(\mathsf{w}_{jk}^\star),
\end{equation}
for all $(j,k)\in\mathcal{A}$. This quantity is the gradient of the projection's output $\mathsf{w}_{jk}^\star$ with respect to its input $\tilde{\mathsf{w}}_{qr}$ for the $\mathsf{w}_{jk}^\star$ that are non-zero.

Finally, for the gradients on $\mathcal{A}^c$, we see from \eqref{eq:slackness} that the complementary slackness condition does not involve the $\mathsf{w}_{jk}^\star$ such that $(j,k)\in\mathcal{A}^c$. The stationarity condition for $(j,k)\in\mathcal{A}^c$ is
\begin{equation*}
-\tilde{\mathsf{w}}_{jk}+\frac{\partial\nu^\star}{\partial\tilde{\mathsf{w}}_{qr}}s\ni0,
\end{equation*}
where $s=[-1,1]$ is the subderivative of the absolute value function at zero. Hence, the stationarity condition also does not involve $\mathsf{w}_{jk}^\star$. It follows that
\begin{equation}
\label{eq:inactivebinding}
\frac{\partial\mathsf{w}_{jk}^\star}{\partial\tilde{\mathsf{w}}_{qr}}=0,
\end{equation}
for all $(j,k)\in\mathcal{A}^c$. This quantity is the gradient of the projection's output $\mathsf{w}_{jk}^\star$ with respect to its input $\tilde{\mathsf{w}}_{qr}$ for the $\mathsf{w}_{jk}^\star$ that are zero.

Combining \eqref{eq:activebinding} and \eqref{eq:inactivebinding} yields the gradient for the binding case.
\end{proof}

\section{PATHWISE OPTIMIZATION}
\label{app:pathwise}

It is typical to compute a sequence of graphs corresponding to different sparsity levels from which the user can select. We take $\lambda$ as a sequence $\{\lambda^{(t)}\}_{t=1}^T$, where $\lambda^{(0)}$ imposes no regularization and $\lambda^{(T)}$ imposes full regularization. Except in degenerate cases, the unregularized model with $\lambda=\lambda^{(0)}$ contains $(p^2-p)/2$ edges, which is the maximum number of edges that an acyclic graph can have. The fully regularized model with $\lambda=\lambda^{(T)}$ contains no edges. Rather than compute these $T$ models independently, we compute them in a pathwise manner by sequentially warm-starting the optimizer. Specifically, the model for $\lambda^{(t+1)}$ is trained using the fitted weights $\hat{\theta}^{(t)}$ from the model for $\lambda^{(t)}$ as an initialization point.

Algorithm~\ref{alg:pathwise} presents the details of the pathwise optimization approach.

\begin{algorithm}[H]
\small
\caption{Pathwise optimization}
\label{alg:pathwise}
\SetKwInOut{Input}{input}
\SetKwInOut{Output}{output}
\Input{Initial network weights $\hat{\theta}^{(0)}\in\mathbb{R}^d$, step size $\varphi>0$, number of regularization parameters $T\in\mathbb{N}$}
Initialize $\lambda^{(1)}\gets\infty$ \\
\For{$t=1,\ldots,T$}{
Initialize $\theta_{(0)}\gets\hat{\theta}^{(t-1)}$ \\
Initialize $m\gets0$ \\
\While{Not converged}{
Update $\theta_{(m+1)}\gets\theta_{(m)}-\varphi\nabla_\theta L(\theta_{(m)};\lambda^{(t)})$ \\
Update $m\gets m+1$
}
Set $\hat{\theta}^{(t)}\gets\theta_{(m)}$ \\
\If{t=1}{
Set $\lambda^{(1)}\gets n^{-1}\sum_{i=1}^n\|W_{\hat{\theta}^{(1)}}(z_i)\|_{\ell_1}$ and $\lambda^{(T)}=0$ \\
Interpolate $T-2$ points between $\lambda^{(1)}$ and $\lambda^{(T)}$
}
}
\Output{Fitted network weights $\hat{\theta}^{(1)},\ldots,\hat{\theta}^{(T)}$}
\end{algorithm}

To unpack the notation, $L(\theta;\lambda)=n^{-1}\sum_{i=1}^n\|x_i-W_\theta(z_i)^\top x_i\|_2^2$ is the loss function evaluated at weights $\theta$ and regularization parameter $\lambda$. Its gradient is denoted $\nabla_\theta L(\theta;\lambda)$. The benefit of pathwise optimization is a significantly reduced overall runtime. Typically, it only takes a small number of iterations to converge if $\lambda^{(t+1)}$ and $\lambda^{(t)}$ are similar.

The initial network weights $\hat{\theta}^{(0)}$ for Algorithm~\ref{alg:pathwise} are computed by training a preliminary neural network that contains no projection layer. This preliminary network is initialized with random weights and is quick to train. Using the resulting trained weights as $\hat{\theta}^{(0)}$ yields better performance than directly taking $\hat{\theta}^{(0)}$ as random weights.

\section{IMPLEMENTATION OF METHODS}
\label{app:implementation}

All neural networks are implemented using two hidden layers, each containing 128 neurons with rectified linear activation functions. For the drug dataset, we use 32 neurons in each hidden layer due to the smaller sample size. Adam \parencite{Kingma2015} is used as an optimizer for learning the network weights, with a learning rate of 0.001. Convergence is monitored on a validation set with the optimizer terminated after 10 iterations without improvement. 

Following \textcite{Bello2022}, we apply a final thresholding step to all methods that sets near-zero edge weights to exact zeros, where the threshold is chosen to guarantee the graphs are acyclic. This step is necessary for continuous structure learning approaches such as NOTEARS and DAGMA because gradient descent and related first-order methods seldom produce machine-precision zeros.

For \emph{all methods}, we sweep the regularization parameter $\lambda$ over a grid of 20 values and take the final $\lambda$ that best approximates the correct sparsity level. This approach fairly compares graphs of the same complexity and ensures that the results are not confounded by uncertainty arising from model selection. In this regard, we follow earlier works such as \textcite{Zheng2018,Bello2022} that also forgo empirically estimating $\lambda$.

The experiments are performed on a Linux platform with two NVIDIA RTX 4090s.

\section{DESIGN OF SYNTHETIC DATASETS}
\label{app:synthetic}

We create synthetic datasets of $n$ observations as follows. First, we generate the contextual features $z_1,\ldots,z_n$ by taking $n$ iid draws uniformly on $[-1,1]^m$. The noise $\varepsilon_1,\ldots,\varepsilon_n$ is also generated by taking $n$ iid draws from a $p$-dimensional $N(0,I)$. The variables $x_1,\ldots,x_n$ are then generated as
\begin{equation*}
x_i=(I-W(z_i))^{-\top}\varepsilon_i,
\end{equation*}
which follows from rearranging terms in the model $x_i=W(z_i)^\top x_i+\varepsilon_i$. This process of sampling $z_1,\ldots,z_n$ and $x_1,\ldots,x_n$ is repeated three times to obtain independent training, validation, and testing sets.

The function $W(z)$ is constructed by randomly sampling an Erdős-Rényi or scale-free graph with 10 edges. This undirected graph is then oriented according to $z_i$ as follows. For each node $j$, we sample a point $c_j$ on $[-1,1]^m$. If the graph contains an edge between node $j$ and node $k$, we direct it from $j$ to $k$ if
\begin{equation*}
\|z_i-c_j\|_2-\|z_i-c_k\|_2>\phi,
\end{equation*}
and remove it otherwise. This scheme imposes a topological ordering where node $j$ is higher than node $k$ in the ordering if $\|z_i-c_j\|_2>\|z_i-c_k\|_2$. The parameter $\phi\geq0$ controls the expected sparsity of the graph (i.e., the average number of active edges). Larger values of $\phi$ encourage a sparser graph (in expectation), while smaller values have the opposite effect. We set $\phi$ empirically so that 5 of the 10 edges are active in the graph on average.

The edge weight $\mathsf{w}_{jk}$ is also allowed to vary with $z$ when non-zero. Specifically, for all edges between nodes $j$ and $k$, we set
\begin{equation*}
\mathsf{w}_{jk}(z)=
\begin{cases}
\|z_i-c_j\|_2-\|z_i-c_k\|_2 & \text{if }\|z_i-c_j\|_2-\|z_i-c_k\|_2>\phi \\
0 & \text{otherwise}.
\end{cases}
\end{equation*}
For all other pairs of nodes $j$ and $k$, we set $\mathsf{w}_{jk}(z)=0$.

\section{SCALE-FREE RESULTS}
\label{app:scale-free}

Figure~\ref{fig:scale-free} reports the scale-free graph results, analogous to the Erdős-Rényi graphs in Section~\ref{sec:synthetic}. The findings are quite similar to those for the Erdős-Rényi graphs---the contextual DAG is highly competitive at structure recovery. Its consistent performance across both settings indicates that it is reliable for multiple graph types.

\begin{figure*}[ht]
\centering
\begin{subfigure}[t]{\linewidth}
\centering
\includegraphics[width=0.8\linewidth]{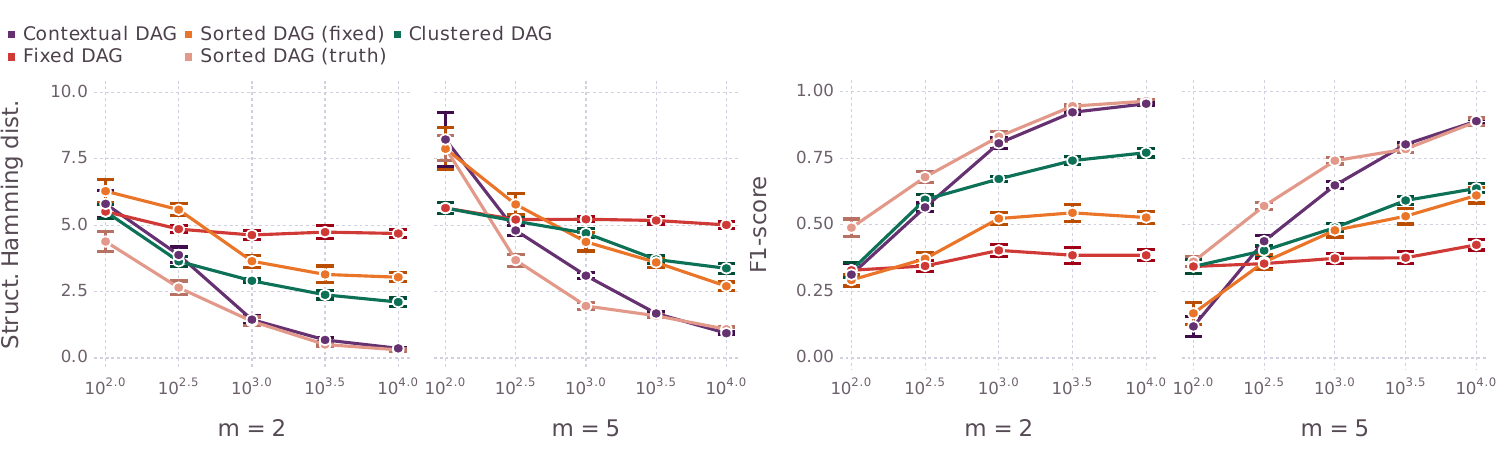}f
\caption{Structure recovery as a function of sample size $n$ for $m\in\{2,5\}$ contextual features.}
\end{subfigure}
\begin{subfigure}[t]{\linewidth}
\centering
\includegraphics[width=0.8\linewidth]{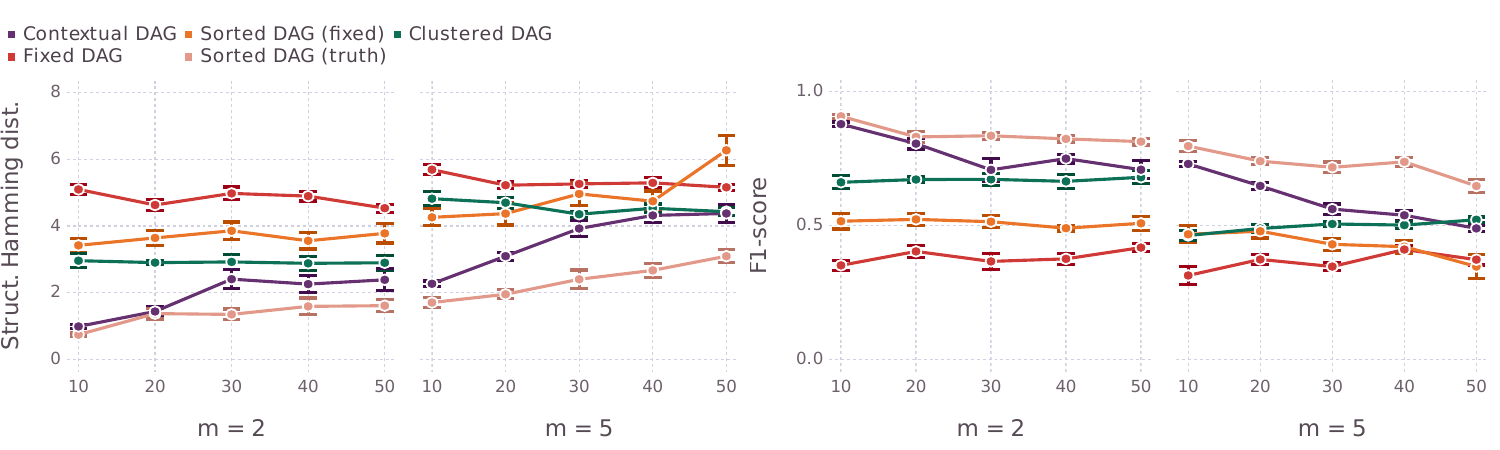}
\caption{Structure recovery as a function of number of nodes $p$ for $m\in\{2,5\}$ contextual features.}
\end{subfigure}
\caption{Structure recovery performance on varying scale-free graphs over 10 synthetic datasets. The number of nodes $p=20$ in the top row and the sample size $n=1000$ in the bottom row. The solid points are averages and the error bars are one standard errors. The sorted DAG (truth) uses the ground truth topological sort.}
\label{fig:scale-free}
\end{figure*}

\section{FIXED GRAPH RESULTS}
\label{app:fixed}

Figure~\ref{fig:fixed} reports the fixed graph results where the contextual features are irrelevant, i.e., $W(z)$ is constant in $z$. Although the inductive bias of methods that assume a single population DAG does have an impact, the contextual DAG remains competitive and still recovers the ground truth for large $n$ despite $z$ having no predictive value

\begin{figure*}[ht]
\centering
\includegraphics[width=0.8\linewidth]{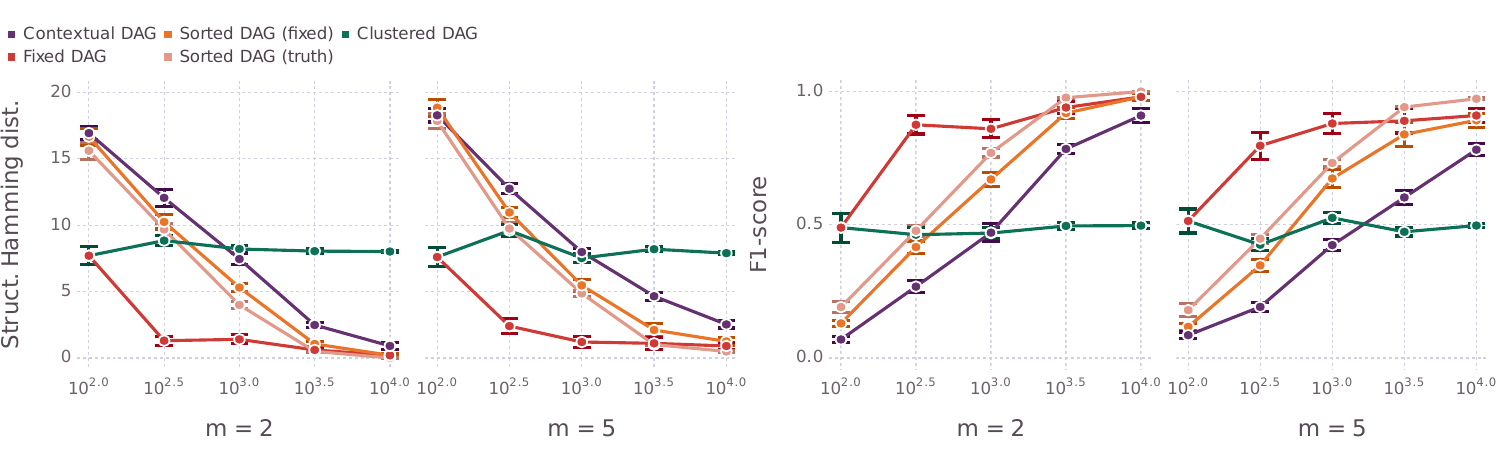}
\caption{Structure recovery performance on fixed Erdős-Rényi graphs over 10 synthetic datasets. The $x$-axis is sample size $n$. The number of contextual features $m\in\{2,5\}$ and the number of nodes $p=20$. The solid points are averages and the error bars are one standard errors. The sorted DAG (truth) uses the ground truth topological sort.}
\label{fig:fixed}
\end{figure*}

\end{appendices}

\end{document}